\newtheorem{definition}{Definition}
\newtheorem{theorem}{Theorem}
\newtheorem{lemma}{Lemma}
\def\BibTeX{{\rm B\kern-.05em{\sc i\kern-.025em b}\kern-.08em
    T\kern-.1667em\lower.7ex\hbox{E}\kern-.125emX}}
\begin{document}
\begin{frontmatter}


\title{
Discovery of Crime Event Sequences with Constricted Spatio-Temporal Sequential Patterns
}

\author[address1]{Piotr S. Maciąg\corref{mycorrespondingauthor}}
\cortext[mycorrespondingauthor]{Corresponding author}
\ead{piotr.maciag@pw.edu.pl}

\author[address1]{Robert Bembenik}
\ead{r.bembenik@ii.pw.edu.pl}

\author[address2]{Artur Dubrawski}
\ead{awd@cs.cmu.edu}

\address[address1]{Warsaw University of Technology, Institute of Computer Science\\Nowowiejska 15/19, 00-665, Warsaw, Poland}
\address[address2]{Carnegie Mellon University, Auton Lab, The Robotics Institute\\ 5000 Forbes Avenue, Pittsburgh, PA 15213, USA}

\begin{abstract}

In this article, we introduce a novel type of spatio-temporal sequential patterns called Constricted Spatio-Temporal Sequential (CSTS) patterns and thoroughly analyze their properties. We demonstrate that the set of CSTS patterns is a concise representation of all spatio-temporal sequential patterns that can be discovered in a given dataset. To measure significance of the discovered CSTS patterns we adapt the participation index measure. We also provide \textit{CSTS-Miner}: an algorithm that discovers all participation index strong CSTS patterns in event data. We experimentally evaluate the proposed algorithms using two crime-related datasets: Pittsburgh Police Incident Blotter Dataset and Boston Crime Incident Reports Dataset. In the experiments, the CSTS-Miner algorithm is compared with the other four state-of-the-art algorithms: STS-Miner, CSTPM, STBFM and CST-SPMiner. As the results of experiments suggest, the proposed algorithm discovers much fewer patterns than the other selected algorithms. Finally, we provide the examples of interesting crime-related patterns discovered by the proposed CSTS-Miner algorithm.
\end{abstract}

\begin{keyword}
data mining, spatio-temporal sequential patterns, crime-data analysis, patterns discovery, concise representation
\end{keyword}

\end{frontmatter}


\section{Introduction}
\label{sec:Introduction}

Discovering knowledge in the form of various types of patterns, inference rules or motifs from spatio-temporal events data is a topic attracting increasing attention of researchers world-wide \citep{Han2011-DataMiningConcepts,Zaki2014-DataMiningAndAnalysis}. Specifically, many real-world spatio-temporal datasets consist of a set of event types and their event instances defined by geographical locations and time occurrence. An example of a spatio-temporal event dataset is a set of crime event incidents, such as arson, homicide or vandalism, each of which is assigned an event type, a geographical location and time of occurrence. Discovering sequences of crime types that occur in a spatial area over a time period can contribute to a better understanding of the causes of these crimes and to their elimination \citep{ref1284:Yu2016,He2021-Crime}. 

In order to discover such sequences of spatio-temporal event types, one can consider applying one of the algorithms for spatio-temporal sequential patterns discovery. A spatio-temporal sequential pattern (in brief, ST sequential pattern), introduced in \citep{ref1284:Huang2008}, is defined as a sequence of event types. By discovering ST sequential patterns, one can obtain insight into spatio-temporal relations between various event types. For example, the discovery of an ST sequential pattern \textit{arson $\rightarrow$ vandalism $\rightarrow$ bomb} can lead to the critical behavior pattern of a dangerous terrorist. 

In the literature, several types of methods and algorithms were already developed for the discovery of ST sequential patterns (see, for example, \citep{ref1284:Huang2008,ref1284:Mohan2012,ref1284:Maciag2019,ref1284:Maciag2019-Kes}). \citep{ref1284:Huang2008} introduced the first algorithm called STS-Miner for the discovery of significant ST sequential patterns. \citep{ref1284:Maciag2019,ref1284:Maciag2019-Kes} define a significant ST sequential pattern as a pattern, whose participation index measure is greater than the user-specified threshold $\theta$. \citep{ref1284:Maciag2019-Kes} refers to such a pattern as a PI-strong ST sequential pattern (we adapt this naming convention in this paper).  

While the already proposed algorithms (such as STS-Miner \citep{ref1284:Huang2008}, STBFM \citep{ref1284:Maciag2019}, CST-SPMiner \citep{ref1284:Maciag2019-Kes}, CSTPM \citep{ref1284:Mohan2012}, STES \citep{Aydin2016}) can discover PI-strong (closed) ST sequential patterns in some datasets, in practice, the number of discovered redundant patterns can still be too huge to be analyzed by a user of the algorithm. Hence, in this paper, we offer a novel representation of ST sequential patterns which we call \textit{Constricted Spatio-Temporal Sequential patterns} (in brief, CSTS patterns) and analyze their theoretical properties. As presented in the paper, given the set of CSTS patterns one can approximate participation index values of all ST sequential patterns. 

To verify the efficiency and effectiveness of the proposed approach, we use two real-world datasets of crime events: the Pittsburgh Police Incident Blotter Dataset and the Boston Crime Incident Reports Dataset. For example, one of the conducted experiments shows that for the Boston Crime Incidents Reports dataset, the proposed approach is able to discover \num{65967} CSTS patterns, while the three algorithms discovering all spatio-temporal sequential patterns provide as many as \num{228285} patterns. The discovered \num{65967} CSTS patterns can be used to derive all \num{228285} spatio-temporal sequential patterns and approximate participation index of each of them with a maximal error of $\pm~0.025$. 

\subsection{Contributions}

The contributions of the paper are as follows:
\begin{itemize}
    \item We introduce the notion of \textbf{a Constricted Spatio-Temporal Sequential (CSTS) pattern} that constitutes concise representation of all ST sequential patterns. 
    
    
    \item We thoroughly \textbf{analyze theoretical properties of CSTS patterns}. Specifically, we show that the set of CSTS patterns is a subset of the set of closed ST sequential patterns and that each CSTS pattern is also a closed ST sequential pattern. Moreover, we show that given the set of Participation Index (PI-)strong CSTS patterns one can obtain the set of all PI-strong ST sequential patterns and approximate participation index of each of them with an approximation margin of $\pm~\varepsilon$.
    
    \item We offer \textbf{a new algorithm called CSTS-Miner} that discovers PI-strong CSTS patterns.  CSTS-Miner applies \textbf{an introduced MAX-Tree structure} for more efficient candidate patterns generation. The proposed MAX-Tree is generated in two main phases of CSTS-Miner: the first phase called "top-down", in which all PI-strong ST sequential patterns are generated using the breadth-first strategy, and the second phase called "bottom-up", which calculates PI-strong CSTS patterns. We analyze properties and computation times of CSTS-Miner.
    
    \item \textbf{We experimentally compare the results obtained with the CSTS-Miner algorithm with three other state-of-the-art algorithms} discovering ST sequential patterns: the adapted version of STS-Miner \citep{ref1284:Huang2008}, STBFM \citep{ref1284:Maciag2019}, CSTPM \citep{ref1284:Mohan2012}, which discover PI-strong ST sequential patterns. Moreover, we also compare CSTS-Miner with the CST-SPMiner algorithm \citep{ref1284:Maciag2019-Kes}, which discovers PI-strong closed ST sequential patterns. Similarly to CSTS patterns, closed spatio-temporal sequential patterns discovered by CST-SPMiner also constitute a concise representation of all ST sequential patterns. For the purpose of experiments, we selected and preprocessed two crime events datasets: the Pittsburgh Police Incident Blotter Dataset and the Boston Crime Incident Reports Dataset. As we show, CSTS-Miner discovers much fewer redundant patterns than the other selected algorithms. Specifically, as the results of the experiments confirm, CSTS-Miner provides up to 60\% fewer patterns compared to the STS-Miner, STBFM and CSTPM algorithms and up to 40\% fewer patterns compared to the CST-SPMiner algorithm. 
    
    \item To the best of our knowledge, for the first time in the literature we provide experimental comparison of the effectiveness and efficiency of the above-mentioned STS-Miner, CSTPM, STBFM and CST-SPMiner algorithms discovering  (closed) spatio-temporal sequential patterns. Our implementations (prepared in the C++ language) of the selected algorithms (STS-Miner, STBFM, CSTPM, CST-SPMiner) as well as the proposed CSTS-Miner algorithm are available at the GitHub repository\footnote{https://github.com/piotrMaciag32/CSTS-Miner}.
    
    \item Eventually, \textbf{we provide examples of interesting crime-related patterns discovered by CSTS-Miner} from the Pittsburgh Police Incident Blotter Dataset.  
\end{itemize}

\subsection{Structure}

The structure of the article is as follows. In Section~\ref{sec:Related Work}, we offer a brief review of the related work. Section~\ref{subsec:Basic Notions} offers basic notions of ST sequential patterns. In Section~\ref{sec:ConstrictedMaximal}, we introduce the notion of a CSTS pattern. In Section~\ref{sec:TheoreticalProperties}, we analyze theoretical properties of CSTS patterns. Section~\ref{sec:CSTS-Miner} describes the proposed CSTS-Miner algorithm. In Section~\ref{sec:ExperimentalEvaluation}, we provide the results of experiments and in Section~\ref{sec:Conclusions} we conclude the article.

\section{Related Work}
\label{sec:Related Work}


The discovery of concise representations of various patterns (especially frequent patterns and sequential patterns) attracts researchers' attention. In \citep{ref1284:Yan2003}, \textit{closed sequential patterns} representation was introduced for the first time. Following \citep{ref1284:Yan2003}, numerous works were dedicated to the problem of more efficient mining of closed sequential patterns. The examples include methods and algorithms offered in \citep{ref1284:Wang2007Frequent,ref1284:Wang2004:BIDEClosedSeq} or, more recently, in \citep{ref1284:Fumarola2016} and \citep{ref1284:Gomariz2013}. Other related research directions include discovery of top closed sequential patterns with the highest support \citep{ref1284:Tzvetkov2003,Zhang2015} and parallel discovery of closed sequential patterns \citep{ref1284:Cong2005}. A survey of the current methods for the discovery of closed sequential patterns can be found in \citep{ref1284:Fournier-Viger2017}. 

While many methods and algorithms were offered to discover various types of spatio-temporal patterns, relatively few of them focused on discovering ST sequential patterns. To this end, in our experiments the proposed CSTS-Miner algorithm is compared with the most representative algorithms mining ST sequential patterns, namely:
\begin{itemize}
    \item \textbf{STS-Miner} \citep{ref1284:Huang2008} - the first algorithm offered for the discovery of ST sequential patterns. STS-Miner uses the depth-first strategy. In this work, we adapted STS-Miner to use the participation index measure instead sequence index measure. Thus, the adapted version of STS-Miner is able to discover PI-strong ST sequential patterns. 

    \item \textbf{STBFM} \citep{ref1284:Maciag2019} - the algorithm discovering PI-strong ST sequential patterns by means of the breadth-first pattern generation strategy. \citep{ref1284:Maciag2019} introduced a structure called SP-Tree that allows to efficiently generate candidate patterns using their first and second parent patterns and a children list of the first parent pattern. In addition, \citep{ref1284:Maciag2019} presented the two variations of STBFM that can discover top-K PI-strong ST sequential patterns. The experiments of \citep{ref1284:Maciag2019} showed that STBFM is able to discover some interesting crime-related patterns. 
    
    \item \textbf{CST-SPMiner} \citep{ref1284:Maciag2019-Kes} - the algorithm discovering closed PI-strong ST sequential patterns. CST-SPMiner applies the breadth-first candidate patterns generation strategy to obtain all PI-strong closed ST sequential patterns. For each obtained PI-strong ST sequential pattern $\overrightarrow{s}$, CST-SPMiner determines if this pattern is closed or not using a check condition verifying if $\overrightarrow{s}$ is a closure pattern of both of its parent patterns. 
    
    \item \textbf{CSTPM} \citep{ref1284:Mohan2012} - the miner discovering Cascading Spatio-Temporal Patterns (CSTP). CSTP patterns consist of not only ST sequential patterns but also of cascades of event types. In our work, to directly compare CSTPM with the proposed CSTS-Miner, we adapted the CSTPM algorithm for the discovery of ST sequential patterns rather than cascading ST patterns. 
\end{itemize}

The reviews of methods and algorithms for (spatio-temporal) patterns discovery with particular emphasis on spatio-temporal event datasets can be found in \citep{ref1284:Kisilevich2010,ref1284:Li2014,ref1284:Sunitha2014, Aydin2016,ref1284:Maciag2017,ref1284:Maciag2018,ref1284:Atluri2018,Ansari2019-STClusteringReview,Aydin2020-Geoinformatica}.

\section{Basic Notions}
\label{subsec:Basic Notions}

The definitions presented in this section are formulated based on the works \citep{ref1284:Huang2008,ref1284:Maciag2019, ref1284:Maciag2019-Kes}.


Let $ \mathbf{F}$ denote a set of $n$ event types and $ \mathbf{D} $ denote a dataset of event instances in which each $ e \in \mathbf{D} $ consists of a spatial location, an occurrence time (timestamp) and an event type $ F \in  \mathbf{F} $. $ \mathbf{D}$ will be called a \textit{spatio-temporal event dataset}. Moreover, let $|\mathbf{D}|$ denote the number of event instances in $\mathbf{D}$. The set of all event instances of type $ F $ in dataset $ \mathbf{D} $ will be denoted by $ \mathbf{D}(F) $. 


Let us consider an example of a spatio-temporal event dataset presented in Figure~\ref{Fig:Dataset}. This dataset consists of event instances $\mathbf{D} = \{a_1, a_2, b_1,$ $ \dots, b_8, c_1, $ $\dots, c_8, d_1, \dots, d_3, $ $e_1, \dots, e_5\}$ and a set of five event types $\mathbf{F} = \{A, B, C, D, E\}$.\footnote{Please note that the spatial location of each event instance presented in Figure~\ref{Fig:Dataset} is specified for simplicity using only one dimension. However, in real datasets, the spatial location of event instances is usually defined by coordinates of two dimensions (for example, in the datasets selected for the experiments, spatial location is defined using longitude and latitude coordinates).}

\begin{figure}[h!t]
	\centering 
	\includegraphics[width=1.0\linewidth]{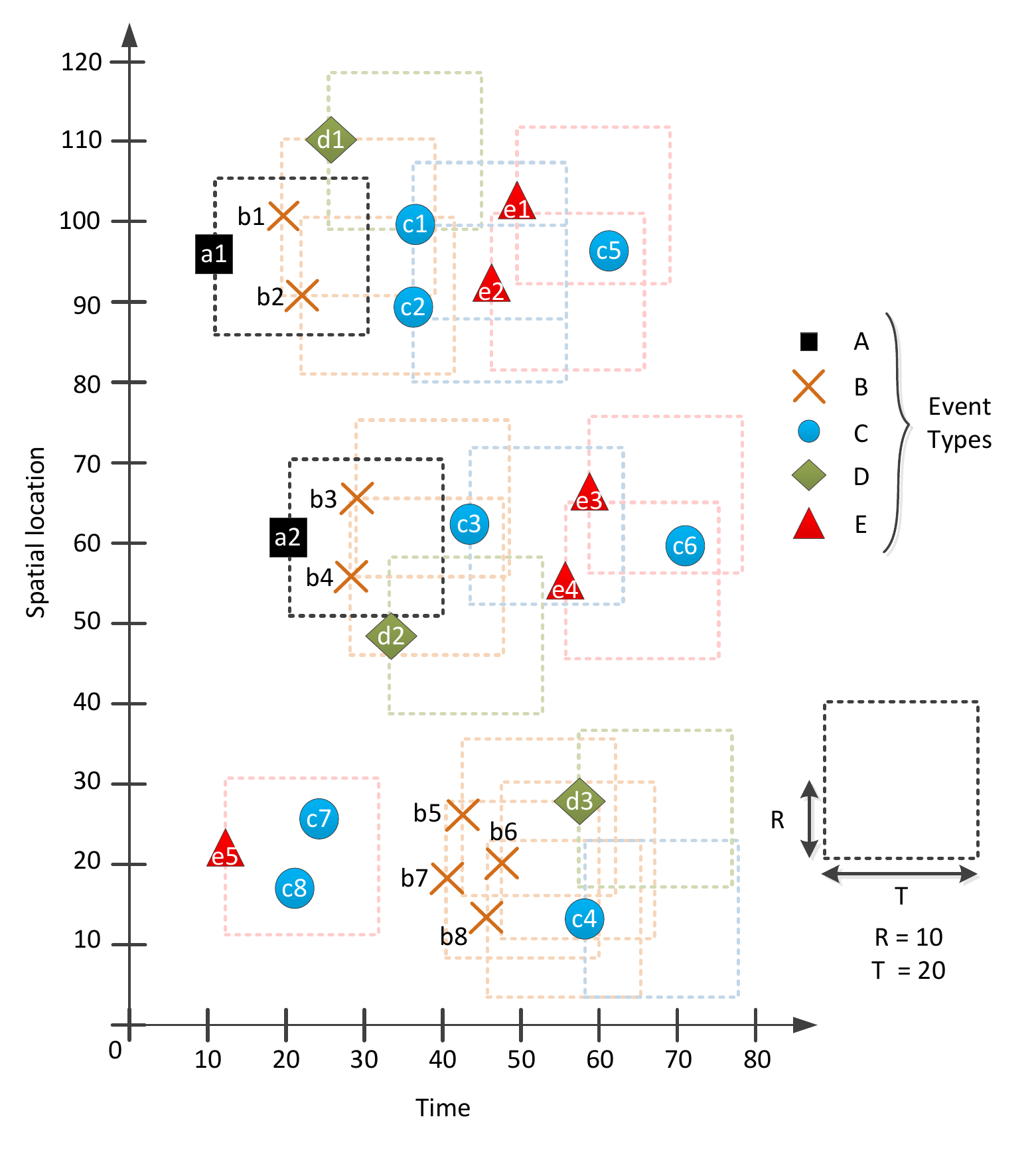}
	\caption{The example of a spatio-temporal event dataset. The figure also presents neighborhood specification of event instances.}
	\label{Fig:Dataset}      
\end{figure}

Spatio-temporal sequential pattern (ST sequential pattern) $\overrightarrow{s}$ is defined as a sequence of elements, each of which is an event type from $\mathbf{F}$ \citep{ref1284:Huang2008}. Please note that an event type $F \in \mathbf{F}$ can occur multiple times in an ST sequential pattern $\overrightarrow{s}$. Before we provide a formal definition of an ST sequential pattern, we recall the definition of a spatio-temporal neighborhood of an event instance with respect to an event type:

\begin{definition}[Neighborhood of event instance with respect to event type \citep{ref1284:Huang2008,ref1284:Maciag2019-Kes}]
	For an event instance $ e $, the \textit{neighborhood of} $ e $ \textit{with respect to an event type} $ F \in \mathbf{F}$ is denoted by $ \mathbf{N}(e, F) $ and is defined as follows:
	\begin{align}
	     \mathbf{N}(e, F) = & \{p~|~p \in \mathbf{D}(F) \nonumber \\ 
	    & \land distance(p.location, e.location) \leq R \\
	    & \land (p.time - e.time) \in (0, T] \}, \nonumber
	\end{align}
	where $R$ and $T$ are user-given spatial distance and time window thresholds, respectively. 
	\label{Def:Neighborhood}
\end{definition}

In Figure~\ref{Fig:Dataset}, the neighborhood $\mathbf{N}(a_1, B)$ consists of event instances $\{b_1, b_2\}$. Similarly, the neighborhood $\mathbf{N}(c_1, E) = \{e_1, e_2\}$. The spatial distance threshold of neighborhoods given in Figure~\ref{Fig:Dataset} is $R = 10$, while the time window threshold is equal to $ T = 20$. In our experiments, the spatial distance between locations of two event instances is calculated as the Euclidean distance between these locations. Please note that according to Definition~\ref{Def:Neighborhood}, an event instance $e_j$ can be located in a neighborhood of an event instance $e_i$, only if the occurrence time of $e_j$ is greater than the occurrence time of $e_i$ (i.e., the two event instances with the same occurrence time can not mutually belong to their neighborhoods because the difference between their occurrence time would be $e_i.time - e_j.time = 0$). 

\begin{definition}[Spatio-temporal sequential pattern]
	A \textit{spatio-temporal sequential pattern} (in brief, \textit{ST sequential pattern}) is a sequence of event types in $ \mathbf{F} $. The number of elements of the sequence is called its \textit{length}. \textit{i}-th element of sequence $ \overrightarrow{s} $ is denoted by $  \overrightarrow{s}[i] $. Sequence $ \overrightarrow{s} $, which consists of $ m $ elements is denoted as $ \overrightarrow{s}[1] \rightarrow \overrightarrow{s}[2] \rightarrow \dots \rightarrow \overrightarrow{s}[m]$. The number of elements of sequence $\overrightarrow{s}$ is defined as its length.
	\label{Def:STPattern}
\end{definition}

The example of an ST sequential pattern for the dataset presented in Figure~\ref{Fig:Dataset} is $\overrightarrow{s} = A \rightarrow B \rightarrow C$, the length of which is $3$. The important question is how to efficiently calculate neighborhoods of event instances? In our implementation, we adapted the plane sweep algorithm (see e.g. \citep{ref1284:Arge1998}) that has computational complexity of $\mathcal{O}(|\mathbf{D}(F)|\cdot log(|\mathbf{D}(F)|))$. 

\begin{definition} [Set of event instances supporting an element of an ST sequential pattern \citep{ref1284:Huang2008,ref1284:Maciag2019-Kes}]
	\textit{A Set of event instances supporting} \textit{i-th element of ST sequential pattern} $ \overrightarrow{s} $ is denoted by $\mathbf{I}(\overrightarrow{s},i)$ and is defined as follows:
	\begin{equation}
	\mathbf{I}(\overrightarrow{s}, i) = \left \{
		\begin{array}{ll}
		\mathbf{D}(\overrightarrow{s}[1]) & \text{ when }  i = 1 ,  \\
		\bigcup\limits_{e \in \mathbf{I}(\overrightarrow{s},i-1)} \mathbf{N}(e, \overrightarrow{s}[i]) & \text{ when }  i > 1.
		\end{array}
		\right.
		\end{equation}
	\label{Def:ISeq}
\end{definition}

For each ST sequential pattern $\overrightarrow{s}$, we can unambiguously distinguish sets of event instances supporting elements of that pattern. For the first element of a pattern, the set of event instances $\mathbf{I}(\overrightarrow{s}, 1)$ supporting that element is defined simply as all event instances of event type $\overrightarrow{s}[1]$ in $\mathbf{D}$. For every next element of $\overrightarrow{s}$ (say $i$), the set of supporting event instances $\mathbf{I}(\overrightarrow{s}, i)$ consists of all those event instances of event type $\overrightarrow{s}[i]$, which belong to neighborhoods of instances contained in the supporting set $\mathbf{I}(\overrightarrow{s}, i - 1)$.

Let us consider an example of a previously given ST sequential pattern $\overrightarrow{s} = A \rightarrow B \rightarrow C$ of the dataset in Figure~\ref{Fig:Dataset}. The sets of event instances supporting $\overrightarrow{s}$ are as follows:
\begin{itemize}
    \item $\mathbf{I}(\overrightarrow{s}, 1) = \mathbf{D}(A) = \{a_1, a_2\}$;
    \item $\mathbf{I}(\overrightarrow{s}, 2) = \bigcup\limits_{a \in \mathbf{I}(\overrightarrow{s},1)} \mathbf{N}(a, B) = \{b_1, b_2, b_3, b_4\}$;
    \item $\mathbf{I}(\overrightarrow{s}, 3) = \bigcup\limits_{b \in \mathbf{I}(\overrightarrow{s}, 2)} \mathbf{N}(b, C) = \{c_1,  c_2, c_3 \}$;
\end{itemize}

In this article, we apply the previously introduced in \citep{ref1284:Mohan2012,Aydin2016,ref1284:Maciag2019} \textit{participation ratio} and \textit{participation index} measures of significance of discovered patterns. The participation ratio of an $i$-th element of an ST sequential pattern $\overrightarrow{s}$ expresses the quotient of the number of event instances supporting $i$-th element of $\overrightarrow{s}$ to the number of event instances of $\overrightarrow{s}[i]$ event type in $\mathbf{D}$. 

\begin{definition}[Participation ratio \citep{ref1284:Maciag2019-Kes}]
	The participation ratio of an $i$-th element of ST sequential pattern $ \overrightarrow{s}$, where $i \geq 1$, is denoted by $PR(\overrightarrow{s}, i)$ and is defined as the ratio of the cardinality of the set of event instances supporting $i$-th element of $ \overrightarrow{s} $ to the number of all instances of type $ \overrightarrow{s}[i] $ in the dataset $\mathbf{D}$; that is: $PR(\overrightarrow{s}, i) = \dfrac{\big|I(\overrightarrow{s}, i)\big|}{\big|\mathbf{D}(\overrightarrow{s}[i])\big|}$.
	\label{Def:PR}
\end{definition}

By Definition~\ref{Def:PR}, the value of participation ratio is always in the range [0,1]. Participation index is defined as the minimal of participation ratios of all elements of an ST sequential pattern. 

\begin{definition}[Participation index \citep{ref1284:Maciag2019-Kes}]
	The \textit{participation index of ST sequential pattern} $ \overrightarrow{s} = \overrightarrow{s}[1] \rightarrow \overrightarrow{s}[2] \rightarrow \dots \rightarrow \overrightarrow{s}[m] $ is denoted by $PI(\overrightarrow{s})$ and is defined as the minimum from the participation ratios of all elements of $ \overrightarrow{s} $; that is, 
	$
	    PI(\overrightarrow{s}) = 
		\text{min}\big(\{PR(\overrightarrow{s}, i)| \ i = 1, 2, \dots, m\}\big) 
	$.
	\label{Def:PI}
\end{definition}

In Table~\ref{Table:PatternsSet}, we listed all ST sequential patterns for which participation indexes are greater than 0 for the dataset of Figure~\ref{Fig:Dataset}.

\begin{table}[h!t]
\caption{All ST sequential patterns and their participation indexes (provided in parentheses next to the patterns) for the dataset of Figure~\ref{Fig:Dataset}.}
\small{
\resizebox{\columnwidth}{!}{\begin{tabularx}{\linewidth}{ll}
	\toprule
    \textbf{Pattern length} & \textbf{Patterns set} \\
    \midrule
    $L_1 (F)$ & $A(1), B(1), C(1), D(1), E(1)$ \\
    \midrule
    $L_2$ & \makecell[l]{$ A \rightarrow B(0.5)$, $ B \rightarrow B(0.625) $, $ B \rightarrow C(0.5) $, \\$ B \rightarrow D(1) $, $ C \rightarrow E(0.8) $, $ E \rightarrow C(0.5) $} \\
    \midrule
    $L_3$ & \makecell[l]{$ A \rightarrow B \rightarrow B(0.25)$, $ A \rightarrow B \rightarrow C(0.375)$, \\$ A \rightarrow B \rightarrow D(0.5)$, $ B \rightarrow B \rightarrow C(0.5)$,\\
                        $ B \rightarrow B \rightarrow D(0.33)$, $ B \rightarrow C \rightarrow E(0.375)$,\\ $ C \rightarrow E \rightarrow C(0.25)$} \\
    \midrule
    $L_4$ & \makecell[l]{$ A \rightarrow B \rightarrow B \rightarrow C(0.25)$, \\$ A \rightarrow B \rightarrow C \rightarrow E(0.375)$, \\ $ B \rightarrow B \rightarrow C \rightarrow E(0.375)$,\\
                        $ B \rightarrow C \rightarrow E \rightarrow C(0.25)$} \\
    \midrule
    $L_5$ & \makecell[l]{$ A \rightarrow B \rightarrow B \rightarrow C \rightarrow E(0.25)$, \\$ A \rightarrow B \rightarrow C \rightarrow E \rightarrow C(0.25)$, \\ $ B \rightarrow B \rightarrow C \rightarrow E \rightarrow C(0.25)$} \\
    \midrule
    $L_6$ &  \makecell[l]{$ A \rightarrow B \rightarrow B \rightarrow C \rightarrow E \rightarrow C(0.25)$}\\
	\bottomrule
\end{tabularx}}
}
\label{Table:PatternsSet}
\end{table}

\begin{definition}[PI-strong ST sequential pattern]
    A candidate ST sequential pattern $\overrightarrow{s}$ is called PI-strong if its participation index $PI(\overrightarrow{s})$ is greater than the participation index threshold $\theta$.
    \label{Def:PIstrong}
\end{definition}

Discovery of all PI-strong ST sequential patterns can be performed, for example, using the STBFM algorithm introduced in \citep{ref1284:Maciag2019}. 

\begin{definition}[(Proper) Subsequence and (proper) supersequence of an ST sequential pattern] 
    Let $ \overrightarrow{s_1} = \overrightarrow{s_1}[1] \rightarrow \overrightarrow{s_1}[2] \rightarrow \dots \rightarrow \overrightarrow{s_1}[m_1] $ and $ \overrightarrow{s_2} = \overrightarrow{s_2}[1] \rightarrow \overrightarrow{s_2}[2] \rightarrow \dots \rightarrow \overrightarrow{s_2}[m_2] $ be ST sequential patterns. $ \overrightarrow{s_1}$ is a subsequence of $\overrightarrow{s_2} $ and $ \overrightarrow{s_2} $ is a supersequence of $ \overrightarrow{s_1} $ if  $ m_1 \leq m_2 $ and there exits an integer $k$, where $0 \leq k \leq m_2 - m_1$, such that $ \overrightarrow{s_1}[1] = \overrightarrow{s_2}[1+k] \land \overrightarrow{s_1}[2] = \overrightarrow{s_2}[2+k] \land \dots \land \overrightarrow{s_1}[m_1] = \overrightarrow{s_2}[m_1 + k]$. 
    
    If $ m_1 < m_2 $, then $ \overrightarrow{s_1} $ is a proper subsequence of $ \overrightarrow{s_2} $ and $ \overrightarrow{s_2} $ is a proper supersequence of $ \overrightarrow{s_1} $.  
	\label{Def:SubAndSup}
\end{definition}

\begin{theorem} [Anti-monotonicity property of the participation index for supersequences \citep{ref1284:Maciag2019-Kes}]
     Let $\overrightarrow{s_1}$ and $\overrightarrow{s_2}$ be ST sequential patterns. If $\overrightarrow{s_1}$ is a subsequence of $\overrightarrow{s_2}$, then $PI(\overrightarrow{s_1}) \geq PI(\overrightarrow{s_2})$ \footnote{We refer the reader to \citep{ref1284:Maciag2019-Kes} for the proof of the theorem.}.
     \label{theorem:1}
\end{theorem}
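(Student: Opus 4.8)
The plan is to reduce the statement about the aggregate measure $PI$ to a position-by-position comparison of participation ratios, and then to establish that comparison through a containment (nestedness) property of the supporting sets.

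First I would unfold Definition~\ref{Def:SubAndSup}: since $\overrightarrow{s_1}$ is a subsequence of $\overrightarrow{s_2}$, there is an offset $k$ with $0 \leq k \leq m_2 - m_1$ such that $\overrightarrow{s_1}[j] = \overrightarrow{s_2}[k+j]$ for every $j \in \{1, \dots, m_1\}$. Because $PI$ is the minimum of the $PR$ values over all positions (Definition~\ref{Def:PI}), and the positions $k+1, \dots, k+m_1$ of $\overrightarrow{s_2}$ form a subset of all its positions, it suffices to show that $PR(\overrightarrow{s_2}, k+j) \leq PR(\overrightarrow{s_1}, j)$ for each $j$. The desired inequality then follows by evaluating both sides at the position $j^\star$ minimizing $PR(\overrightarrow{s_1}, \cdot)$, giving $PI(\overrightarrow{s_1}) = PR(\overrightarrow{s_1}, j^\star) \geq PR(\overrightarrow{s_2}, k+j^\star) \geq PI(\overrightarrow{s_2})$. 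Since the matched positions carry identical event types, the denominators $|\mathbf{D}(\overrightarrow{s_1}[j])| = |\mathbf{D}(\overrightarrow{s_2}[k+j])|$ agree, so the comparison of ratios reduces further to the numerator containment $|\mathbf{I}(\overrightarrow{s_2}, k+j)| \leq |\mathbf{I}(\overrightarrow{s_1}, j)|$.

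The core of the argument is the set-containment claim $\mathbf{I}(\overrightarrow{s_2}, k+j) \subseteq \mathbf{I}(\overrightarrow{s_1}, j)$ for all $j \in \{1, \dots, m_1\}$, which I would prove by induction on $j$ using Definition~\ref{Def:ISeq}. For the inductive step I would rewrite $\mathbf{I}(\overrightarrow{s_2}, k+j+1) = \bigcup_{e \in \mathbf{I}(\overrightarrow{s_2}, k+j)} \mathbf{N}(e, \overrightarrow{s_2}[k+j+1])$, observe that $\overrightarrow{s_2}[k+j+1] = \overrightarrow{s_1}[j+1]$ (this is where the contiguity of the matched block is used), and invoke the monotonicity of the operation $S \mapsto \bigcup_{e \in S}\mathbf{N}(e, F)$ in its index set $S$ together with the inductive hypothesis to land inside $\bigcup_{e \in \mathbf{I}(\overrightarrow{s_1}, j)} \mathbf{N}(e, \overrightarrow{s_1}[j+1]) = \mathbf{I}(\overrightarrow{s_1}, j+1)$. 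Taking cardinalities and assembling the reductions above then completes the proof.

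I expect the base case $j = 1$ to be the delicate part, since it is precisely where the offset $k$ matters. When $k = 0$ the first supporting sets coincide, $\mathbf{I}(\overrightarrow{s_2}, 1) = \mathbf{D}(\overrightarrow{s_2}[1]) = \mathbf{D}(\overrightarrow{s_1}[1]) = \mathbf{I}(\overrightarrow{s_1}, 1)$. When $k \geq 1$, however, $\mathbf{I}(\overrightarrow{s_2}, k+1)$ is a union of neighborhoods, and the key observation is that every neighborhood $\mathbf{N}(e, F)$ is by Definition~\ref{Def:Neighborhood} contained in $\mathbf{D}(F)$; hence $\mathbf{I}(\overrightarrow{s_2}, k+1) \subseteq \mathbf{D}(\overrightarrow{s_1}[1]) = \mathbf{I}(\overrightarrow{s_1}, 1)$. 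This captures the essential intuition: the subsequence ``resets'' its first supporting set to the full instance set $\mathbf{D}(\cdot)$, the largest it can ever be, whereas inside $\overrightarrow{s_2}$ the supporting set at the same position has already been filtered through the preceding $k$ elements and can only have shrunk.
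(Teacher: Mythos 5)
Your proof is correct, but note that the paper itself contains no proof of Theorem~\ref{theorem:1} to compare against: the footnote defers entirely to the cited reference \citep{ref1284:Maciag2019-Kes}. Your argument is a valid, self-contained derivation from the paper's own definitions, and it follows the natural route one would expect the cited proof to take. The three reductions are all sound: (i) the positionwise inequality $PR(\overrightarrow{s_2}, k+j) \leq PR(\overrightarrow{s_1}, j)$ does imply $PI(\overrightarrow{s_1}) \geq PI(\overrightarrow{s_2})$, since evaluating at the minimizer of $PR(\overrightarrow{s_1},\cdot)$ and then passing to the minimum over all positions of $\overrightarrow{s_2}$ loses nothing; (ii) the denominators agree because Definition~\ref{Def:SubAndSup} matches identical event types; and (iii) the containment $\mathbf{I}(\overrightarrow{s_2}, k+j) \subseteq \mathbf{I}(\overrightarrow{s_1}, j)$ follows by induction from Definition~\ref{Def:ISeq}, using contiguity of the matched block and monotonicity of $S \mapsto \bigcup_{e \in S}\mathbf{N}(e, F)$. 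Your treatment of the base case is the one place where care is genuinely required, and you handle it correctly: for $k \geq 1$ the set $\mathbf{I}(\overrightarrow{s_2}, k+1)$ is a union of neighborhoods, each contained in $\mathbf{D}(\overrightarrow{s_2}[k+1]) = \mathbf{D}(\overrightarrow{s_1}[1]) = \mathbf{I}(\overrightarrow{s_1}, 1)$ by Definition~\ref{Def:Neighborhood}, which is exactly the ``reset to the largest possible supporting set'' intuition that makes anti-monotonicity hold even for non-prefix subsequences.
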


For the dataset presented in Figure~\ref{Fig:Dataset}, $\overrightarrow{s_1} = A \rightarrow B$ is a proper subsequence of $\overrightarrow{s_2} = A \rightarrow B \rightarrow C \rightarrow E$ ($\overrightarrow{s_2}$ is a proper supersequence of $\overrightarrow{s_1}$). As follows from Theorem~\ref{theorem:1}, the $PI$ value of an ST sequential pattern $\overrightarrow{s_2}$ is always less than or equal to the $PI$ value of its proper subsequence $\overrightarrow{s_1}$. The STBFM, CST-SPMiner and CSTPM algorithms apply Theorem~\ref{theorem:1} to efficiently generate candidate ST sequential patterns using the breadth-first search strategy.

\citep{ref1284:Maciag2019-Kes} introduced a concise and lossless representation of ST sequential patterns called closed ST sequential patterns. The important property of closed ST sequential patterns is that one can obtain the value of participation index of any ST sequential pattern given only the set of all closed ST sequential patterns. In the following section of this work, we theoretically and experimentally compare the proposed constricted ST sequential patterns to the closed ST sequential patterns. Thus, Definition~\ref{Def:ClosedST} recalls the notions of closed ST sequential pattern, closure of an ST sequential pattern and PI-strong closed ST sequential pattern. 

\begin{definition}[Closed ST sequential pattern and closure of an ST sequential pattern \citep{ref1284:Maciag2019-Kes}] 
    ST sequential pattern $ \overrightarrow{s_1} $ is closed if there exists no proper supersequence $ \overrightarrow{s_2} $ of $ \overrightarrow{s_1} $, such that the participation index $ PI(\overrightarrow{s_2}) = PI(\overrightarrow{s_1}) $.
    
    A closure of ST sequential pattern $ \overrightarrow{s_1} $ is a supersequence $ \overrightarrow{s_2} $ of $ \overrightarrow{s_1} $, such that $ \overrightarrow{s_2}$ is a closed ST sequential pattern and $ PI(\overrightarrow{s_2}) = PI(\overrightarrow{s_1}) $. 
    \label{Def:ClosedST}
\end{definition}

\begin{definition}
    A PI-strong closed ST sequential pattern is a closed ST sequential pattern whose participation index is greater than the threshold $\theta$.
\label{Def:PIStrongClosedST}
\end{definition}

For example, for the set of ST sequential patterns of Table~\ref{Table:PatternsSet}, $A \rightarrow B \rightarrow B \rightarrow C \rightarrow E \rightarrow C(0.25)$ is a closed ST sequential pattern. $A \rightarrow B \rightarrow B \rightarrow C \rightarrow E \rightarrow C(0.25)$ is also a closure of the following patterns:
\begin{itemize}
    \item $A \rightarrow B \rightarrow B(0.25)$,
    \item $ C \rightarrow E \rightarrow C(0.25)$,
    \item $ A \rightarrow B \rightarrow B \rightarrow C(0.25)$,
    \item $ B \rightarrow C \rightarrow E \rightarrow C(0.25)$,
    \item $ A \rightarrow B \rightarrow B \rightarrow C \rightarrow E(0.25)$,
    \item $ A \rightarrow B \rightarrow C \rightarrow E \rightarrow C(0.25)$,
    \item $ B \rightarrow B \rightarrow C \rightarrow E \rightarrow C(0.25)$.
\end{itemize}

An ST sequential pattern can be closed and be its own closure. For example, $ B \rightarrow B(0.625) $ is a closed ST sequential pattern and it is also its own closure. Please note that according to Definition~\ref{Def:ClosedST} an ST sequential pattern can have more than one closure. For example, pattern $B \rightarrow C \rightarrow E(0.375)$ has two closures:
\begin{itemize}
    \item $ A \rightarrow B \rightarrow C \rightarrow E(0.375)$,
    \item $ B \rightarrow B \rightarrow C \rightarrow E(0.375)$.
\end{itemize}

\section{Discovery of Constricted ST Sequential Patterns}
\label{sec:ConstrictedMaximal}

In this section, we first present our motivation for introducing Constricted ST Sequential patterns. Next, we provide the elementary notions of such type of patterns. 

\subsection{Motivation}

Closed ST sequential patterns of Definition~\ref{Def:ClosedST} are a concise representation of all ST sequential patterns (the set of closed ST sequential patterns can be used to derive all ST sequential patterns). However, in the case of real-world spatio-temporal event data, participation indexes of ST sequential patterns strictly depend on the specification of the neighbourhoods of event instances as well as spatial and temporal distribution of the locations of event instances in the dataset $\mathbf{D}$. Specifically, for a given ST sequential pattern $\overrightarrow{s}$ of length $k$, its proper supersequence patterns of length greater than $k$ usually have only slightly smaller values of participation indexes than $\overrightarrow{s}$. According to Definition~\ref{Def:ClosedST}, none of such supersequences can constitute a closure pattern of $\overrightarrow{s}$ and thus be a closed ST sequential pattern. 
Nevertheless, providing only supersequence patterns of $\overrightarrow{s}$ often will allow us to approximate the participation index of $\overrightarrow{s}$.


Hence, in this paper, we offer a notion of a constricted ST sequential pattern. We define a constricted ST sequential pattern $\overrightarrow{s_1}$ as such maximal (that is the longest) supersequence of pattern $\overrightarrow{s}$ for which (i) the difference between participation indexes of $\overrightarrow{s}$ and $\overrightarrow{s_1}$ is minimal and (ii) participation index of $\overrightarrow{s_1}$ is greater than or equal to the participation index of $\overrightarrow{s}$ minus approximation margin $\varepsilon$\footnote{As we show in Section~\ref{sec:TheoreticalProperties}, for approximation margin $\varepsilon = 0$, the proposed notion of a constricted ST sequential pattern is equivalent to the notion of a closed ST sequential pattern.}. We show in Section~\ref{sec:TheoreticalProperties} that given the set of PI-strong constricted ST sequential patterns, one can obtain the set of all PI-strong ST sequential patterns and approximate participation indexes of each of them with an approximation margin $\pm~\varepsilon$.



\subsection{Elementary Notions}

Let us begin with the definitions of a maximal supersequence of an ST sequential pattern and a minimal proper supersequence of an ST sequential pattern.

\begin{definition}[Maximal supersequence of an ST sequential pattern]
For an ST sequential pattern $\overrightarrow{s_1}$ of a dataset $\mathbf{D}$, its maximal ST supersequnce pattern $\overrightarrow{s_2}$ is such a supersequence of $\overrightarrow{s_1}$ whose length is the greatest. Please note that $\overrightarrow{s_1}$ can have more than one maximal ST supersequence pattern.
\label{Def:MaximalSuperseq}
\end{definition}

\begin{definition}[Minimal proper supersequence of an ST sequential pattern]
For an ST sequential pattern $\overrightarrow{s_1}$ of a dataset $\mathbf{D}$, its minimal proper supersequnce pattern $\overrightarrow{s_2}$ is such a proper supersequence of $\overrightarrow{s_1}$ whose length is the smallest. Please note that $\overrightarrow{s_1}$ can have more than one minimal proper ST supersequence patterns.
\label{Def:MinimalSuperseq}
\end{definition}

The set of supersequences of the pattern $\overrightarrow{s_1} = B \rightarrow B$ presented in Table~\ref{Table:PatternsSet} is: 
\begin{itemize}
    \item $B \rightarrow B$,
    \item $A \rightarrow B \rightarrow B$, 
    \item $B \rightarrow B \rightarrow C$,
    \item $B \rightarrow B \rightarrow D$, 
    \item $A \rightarrow B \rightarrow B \rightarrow C$,
    \item $B \rightarrow B \rightarrow C \rightarrow E$,
    \item $A \rightarrow B \rightarrow B \rightarrow C \rightarrow E$,
    \item $A \rightarrow B \rightarrow B \rightarrow C \rightarrow E \rightarrow C$.
\end{itemize}
From which, the maximal supersequence of the pattern $\overrightarrow{s_1}$ is $A \rightarrow B \rightarrow B \rightarrow C \rightarrow E \rightarrow C$ and the minimal proper supersequences of $\overrightarrow{s_1}$ are $A \rightarrow B \rightarrow B$, $B \rightarrow B \rightarrow C$, $B \rightarrow B \rightarrow D$.

\begin{definition}[$\varepsilon$-constricted maximal supersequence of an ST sequential pattern]
$\varepsilon$-constricted maximal supersequence $\overrightarrow{s_1}$ of an ST sequential pattern $ \overrightarrow{s} $ (in brief, Constricted ST Sequential pattern, CSTS pattern) is such a supersequence of $\overrightarrow{s}$ which preserves the two conditions:
\begin{enumerate}
    \item $PI(\overrightarrow{s_1}) \geq PI(\overrightarrow{s}) - \varepsilon $, and
    \item the difference $PI(\overrightarrow{s}) - PI(\overrightarrow{s_1}$) is minimal over the set of all maximal supersequences of $\overrightarrow{s_1}$.
\end{enumerate}
We denote the set of all $\varepsilon$-constricted maximal supersequences of pattern $\overrightarrow{s}$ as $\mathcal{C}^{max}(\overrightarrow{s})$. $\varepsilon$ is an approximation margin parameter, whose value is user-specified and is in the range [0,1].
\label{Def:ConstrictedSuperseq}
\end{definition}

To illustrate Definition~\ref{Def:ConstrictedSuperseq}, let us consider again the pattern $\overrightarrow{s} = B \rightarrow B$ presented in Table~\ref{Table:PatternsSet} and let us assume that $\varepsilon = 0.25$. The participation index of $\overrightarrow{s}$ equals $0.675$. The $\varepsilon$-constricted supersequence of $\overrightarrow{s}$ is the ST sequential pattern $\mathcal{C}^{max}(\overrightarrow{s}) = \{B \rightarrow B \rightarrow C \rightarrow E\}$, whose participation index equals $0.325$. 

Please note that an ST sequential pattern can have more than one $\varepsilon$-constricted maximal supersequence. For example, let us consider ST sequential pattern $\overrightarrow{s} = B \rightarrow C \rightarrow E$ presented in Table~\ref{Table:PatternsSet} and let us assume that approximation margin $\varepsilon = 0.1$. The two $\varepsilon$-constricted maximal supersequences of $\overrightarrow{s}$ are: $ \mathcal{C}^{max}(\overrightarrow{s}) = \{A \rightarrow B \rightarrow C \rightarrow E, B \rightarrow B \rightarrow C \rightarrow E\}$, whose $PI$ values are both equal to $0.375$. Please also note that according to Definition~\ref{Def:ConstrictedSuperseq} an ST sequential pattern $\overrightarrow{s}$ can be its own $\varepsilon$-constricted maximal supersequence \footnote{In fact, as we present in Section~\ref{sec:CSTS-Miner} each ST sequential pattern of the maximal length patterns set (say $L_k$) is its own $\varepsilon$-constricted maximal supersequence}.

From Definition~\ref{Def:ConstrictedSuperseq} follows that an ST sequential pattern $\overrightarrow{s}$ can be a CSTS pattern of the other ST sequential pattern, but also can have its own CSTS pattern. For example, let us consider the pattern $\overrightarrow{s} = B \rightarrow B \rightarrow C \rightarrow E$ from Table~\ref{Table:PatternsSet} and assume that $\varepsilon = 0.25$. $\overrightarrow{s}$ is a CSTS pattern of the ST sequential pattern $B \rightarrow B$, but also has its own CSTS pattern $\mathcal{C}^{max}(\overrightarrow{s}) = \{A \rightarrow B \rightarrow B \rightarrow C \rightarrow E \rightarrow C\}$.

\begin{definition}
By $\mathcal{RC}^{max}(\overrightarrow{s_1})$ (reverse maximal closure set) we denote a set of ST sequential patterns for which $\overrightarrow{s_1}$ is the $\varepsilon$-constricted maximal supersequence ($\overrightarrow{s_1}$ is a CSTS pattern). 
\label{Def:RCList}
\end{definition}

Let us consider an ST sequential pattern $\overrightarrow{s_1} = A \rightarrow B \rightarrow B \rightarrow C \rightarrow E \rightarrow C$ from Table~\ref{Table:PatternsSet}, whose $PI = 0.25$. Additionally, let us assume that approximation margin is $\varepsilon = 0.25$. The set $\mathcal{RC}^{max}(\overrightarrow{s_1})$ is (the numbers in parentheses specify participation indexes):
\begin{itemize}
    \item $A \rightarrow B \rightarrow B \rightarrow C \rightarrow E \rightarrow C(0.25)$,
    \item $A \rightarrow B \rightarrow B \rightarrow C \rightarrow E(0.25)$, 
    \item $ B \rightarrow B \rightarrow C \rightarrow E \rightarrow C(0.25)$, 
    \item $ A \rightarrow B \rightarrow B \rightarrow C(0.25)$, 
    \item $ B \rightarrow B \rightarrow C \rightarrow E(0.375)$, 
    \item $ B \rightarrow C \rightarrow E \rightarrow C(0.25)$, 
    \item  $A \rightarrow B \rightarrow B(0.25)$, 
    \item $ B \rightarrow B \rightarrow C(0.5)$, 
    \item $ B \rightarrow C \rightarrow E(0.375)$,
    \item  $A \rightarrow B(0.5)$, $ B \rightarrow C(0.5) $, $ E \rightarrow C(0.5) $.
\end{itemize}

The set $\mathcal{RC}^{max}(\overrightarrow{s_1})$ is applied by the proposed CSTS-Miner algorithm to identify CSTS patterns.

\begin{definition}[The set of all PI-strong CSTS patterns] 
The set of all PI-strong CSTS patterns is defined as the set of all ST sequential patterns that are PI-strong (according to Definition~\ref{Def:PIstrong}) and are CSTS patterns (according to Definition~\ref{Def:ConstrictedSuperseq}). 
\end{definition}

The proposed CSTS-Miner algorithm first discovers all PI-strong ST sequential patterns and then subsequently returns only those of them which are PI-strong CSTS patterns. 

\section{Theoretical Properties of $\varepsilon$-constricted Maximal ST Sequential Patterns}
\label{sec:TheoreticalProperties}

In this section, we derive theoretical properties of the introduced notions of (PI-strong) CSTS patterns. Specifically, we show that:
\begin{itemize}
    \item for the parameter $\varepsilon = 0$ the set of all CSTS patterns is equivalent to the set of all closed ST sequential patterns (Lemma~\ref{Lem:CSTSasCLosed});
    \item each CSTS pattern is a closed ST sequential pattern regardless of the value of the approximation margin $\varepsilon $ (Lemma~\ref{Lem:CSTSisClosed});
    \item the set of CSTS patterns is a subset of the set of closed ST patterns for any value of the approximation margin parameter $\varepsilon$ (Theorem~\ref{Thm:CSTSsubsetClosed});
    \item it can be verified if an ST sequential pattern $\overrightarrow{s}$ is PI-strong given the set of PI-strong CSTS patterns and how to approximate $PI$ value of $\overrightarrow{s}$ (Lemma~\ref{Thm:ApproxPIofSTPatt});
    \item one can approximate the value of $PI$ of $\overrightarrow{s}$ with an error no greater than $\pm\varepsilon$ (Lemma~\ref{Lem:MaximalApprox}).
\end{itemize}

Lemma~\ref{Lem:CSTSasCLosed} shows that given an ST sequential pattern $\overrightarrow{s}$ and its CSTS pattern $\overrightarrow{s_1}$ as well as for $\varepsilon = 0$, $\overrightarrow{s_1}$ is a closure of $\overrightarrow{s}$ and $\overrightarrow{s_1}$ is a closed ST sequential pattern.


\begin{lemma} 
     Let $\overrightarrow{s}$ be an ST sequential pattern, $\varepsilon = 0$ and $\overrightarrow{s_1}$ be a CSTS pattern of $\overrightarrow{s}$. The CSTS pattern $\overrightarrow{s_1}$ is a closure of $\overrightarrow{s}$ and $\overrightarrow{s_1}$ is a closed ST sequential pattern.
     \label{Lem:CSTSasCLosed}
\end{lemma}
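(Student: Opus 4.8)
The plan is to exploit the anti-monotonicity of the participation index (Theorem~\ref{theorem:1}) to force an exact equality of participation indexes when $\varepsilon = 0$, and then to use the length-maximality built into the CSTS definition to rule out any $PI$-preserving proper extension. First I would establish that $PI(\overrightarrow{s_1}) = PI(\overrightarrow{s})$: condition~(1) of Definition~\ref{Def:ConstrictedSuperseq} with $\varepsilon = 0$ gives $PI(\overrightarrow{s_1}) \geq PI(\overrightarrow{s})$, while, since $\overrightarrow{s}$ is a subsequence of its supersequence $\overrightarrow{s_1}$, Theorem~\ref{theorem:1} yields $PI(\overrightarrow{s_1}) \leq PI(\overrightarrow{s})$; the two inequalities combine to the desired equality.

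Next I would show that $\overrightarrow{s_1}$ is closed, arguing by contradiction. Suppose $\overrightarrow{s_1}$ is not closed; then by Definition~\ref{Def:ClosedST} there exists a proper supersequence $\overrightarrow{s_2}$ of $\overrightarrow{s_1}$ with $PI(\overrightarrow{s_2}) = PI(\overrightarrow{s_1})$. By transitivity of the supersequence relation of Definition~\ref{Def:SubAndSup}, $\overrightarrow{s_2}$ is also a supersequence of $\overrightarrow{s}$, and from the previous step $PI(\overrightarrow{s_2}) = PI(\overrightarrow{s}) \geq PI(\overrightarrow{s}) - \varepsilon$, so $\overrightarrow{s_2}$ still satisfies condition~(1). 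Since $\overrightarrow{s_2}$ is strictly longer than $\overrightarrow{s_1}$, this contradicts the requirement that a CSTS pattern be a maximal (longest) supersequence of $\overrightarrow{s}$ among those satisfying condition~(1). Hence no such $\overrightarrow{s_2}$ exists and $\overrightarrow{s_1}$ is closed.

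The closure claim then assembles the pieces: $\overrightarrow{s_1}$ is a supersequence of $\overrightarrow{s}$ (by definition of a CSTS pattern), it is a closed ST sequential pattern (previous step), and $PI(\overrightarrow{s_1}) = PI(\overrightarrow{s})$ (first step), which are exactly the three conditions required of a closure of $\overrightarrow{s}$ in Definition~\ref{Def:ClosedST}. Therefore $\overrightarrow{s_1}$ is a closure of $\overrightarrow{s}$, completing the proof.

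I expect the main obstacle to be the careful handling of the maximality clause of Definition~\ref{Def:ConstrictedSuperseq}: the contradiction step hinges on reading condition~(2) as asserting that $\overrightarrow{s_1}$ attains the greatest length among all supersequences of $\overrightarrow{s}$ satisfying condition~(1), so I would make this reading explicit before invoking it. A secondary point worth spelling out is the transitivity of the subsequence/supersequence relation of Definition~\ref{Def:SubAndSup}, which the contradiction relies on when passing from the facts that $\overrightarrow{s}$ is a subsequence of $\overrightarrow{s_1}$ and $\overrightarrow{s_1}$ is a subsequence of $\overrightarrow{s_2}$ to the conclusion that $\overrightarrow{s}$ is a subsequence of $\overrightarrow{s_2}$.
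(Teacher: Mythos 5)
Your proposal is correct and follows essentially the same route as the paper's proof: both rest on the two facts that $\varepsilon = 0$ forces $PI(\overrightarrow{s_1}) = PI(\overrightarrow{s})$ and that the length-maximality in Definition~\ref{Def:ConstrictedSuperseq} rules out any $PI$-preserving proper extension, which together give the closure property of Definition~\ref{Def:ClosedST}. Your write-up is in fact tighter than the paper's: the paper merely asserts these two facts, whereas you derive the $PI$ equality from Theorem~\ref{theorem:1} and prove closedness by the explicit contradiction argument that the paper only spells out later, in its proof of Lemma~\ref{Lem:CSTSisClosed}.
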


\begin{proof}
    The proof of lemma follows from Definitions~\ref{Def:ClosedST} and~\ref{Def:ConstrictedSuperseq}. By Definition~\ref{Def:ClosedST}, a closure of ST sequential pattern $\overrightarrow{s}$ is such a closed ST sequential pattern $\overrightarrow{s*}$ that is a supersequence of $\overrightarrow{s}$ and whose $PI(\overrightarrow{s*}) = PI(\overrightarrow{s})$. For $\varepsilon = 0$ we have:
    \begin{itemize} 
    \item the $PI$ value of $\overrightarrow{s_1}$ being a CSTS pattern of $\overrightarrow{s}$ equals $PI$ value of $\overrightarrow{s}$,
    \item CSTS pattern $\overrightarrow{s_1}$ is a maximal supersequence of $\overrightarrow{s}$.
    \end{itemize}
    Thus, for $\varepsilon = 0$ the CSTS pattern $\overrightarrow{s_1}$ is a closed ST sequential pattern.
\end{proof}

It follows from Lemma~\ref{Lem:CSTSasCLosed} that for the parameter $\varepsilon = 0$ the set of all PI-strong CSTS patterns is equivalent to the set of all PI-strong closed ST sequential patterns (in other words, for $\varepsilon = 0$ the proposed algorithm CSTS-Miner will return the same patterns set as the CST-SPMiner of \citep{ref1284:Maciag2019-Kes}). 

In fact, in Lemma~\ref{Lem:CSTSisClosed} we show that each CSTS pattern is a closed ST sequential pattern regardless of the value of the approximation margin $\varepsilon $.

\begin{lemma} 
     Each CSTS pattern is a closed ST sequential pattern regardless of the value of approximation margin $\varepsilon $.
     \label{Lem:CSTSisClosed}
\end{lemma}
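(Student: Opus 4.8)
The plan is to argue by contradiction, using the \emph{maximality} clause baked into the definition of a CSTS pattern together with transitivity of the supersequence relation. Let $\overrightarrow{s_1}$ be any CSTS pattern; by Definition~\ref{Def:ConstrictedSuperseq} there is an ST sequential pattern $\overrightarrow{s}$ such that $\overrightarrow{s_1}$ is an $\varepsilon$-constricted maximal supersequence of $\overrightarrow{s}$, i.e.\ $\overrightarrow{s_1}$ is a supersequence of $\overrightarrow{s}$ satisfying $PI(\overrightarrow{s_1}) \geq PI(\overrightarrow{s}) - \varepsilon$ and, among all supersequences of $\overrightarrow{s}$ obeying this $\varepsilon$-constraint, $\overrightarrow{s_1}$ has maximal length (and minimal $PI$ gap to $\overrightarrow{s}$). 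I would suppose, toward a contradiction, that $\overrightarrow{s_1}$ is \emph{not} closed. Then Definition~\ref{Def:ClosedST} furnishes a proper supersequence $\overrightarrow{s_2}$ of $\overrightarrow{s_1}$ with $PI(\overrightarrow{s_2}) = PI(\overrightarrow{s_1})$.

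The crux is to show that this hypothetical $\overrightarrow{s_2}$ would itself be an $\varepsilon$-constricted maximal supersequence of $\overrightarrow{s}$ that strictly dominates $\overrightarrow{s_1}$, which is impossible. I would check the three required facts in turn. First, by transitivity of the supersequence relation of Definition~\ref{Def:SubAndSup} — $\overrightarrow{s_1}$ is a supersequence of $\overrightarrow{s}$ and $\overrightarrow{s_2}$ is a proper supersequence of $\overrightarrow{s_1}$ — the pattern $\overrightarrow{s_2}$ is a proper supersequence of $\overrightarrow{s}$. Second, since $PI(\overrightarrow{s_2}) = PI(\overrightarrow{s_1}) \geq PI(\overrightarrow{s}) - \varepsilon$, the pattern $\overrightarrow{s_2}$ still satisfies condition~(1) of Definition~\ref{Def:ConstrictedSuperseq}, and because its participation index equals that of $\overrightarrow{s_1}$ we also have $PI(\overrightarrow{s}) - PI(\overrightarrow{s_2}) = PI(\overrightarrow{s}) - PI(\overrightarrow{s_1})$, so $\overrightarrow{s_2}$ attains exactly the same minimal $PI$ gap. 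Third, $\overrightarrow{s_2}$ being a \emph{proper} supersequence of $\overrightarrow{s_1}$ makes it strictly longer than $\overrightarrow{s_1}$.

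Putting these together yields the contradiction: $\overrightarrow{s_2}$ is a strictly longer supersequence of $\overrightarrow{s}$ that still meets the $\varepsilon$-constraint while realizing the same minimal $PI$ gap, contradicting the maximality of length demanded of $\overrightarrow{s_1}$ in Definition~\ref{Def:ConstrictedSuperseq}. Hence no such $\overrightarrow{s_2}$ exists and $\overrightarrow{s_1}$ is closed. I would emphasize that the argument never appeals to the numerical value of $\varepsilon$: the only role of $\varepsilon$ is in condition~(1), which transfers verbatim from $\overrightarrow{s_1}$ to $\overrightarrow{s_2}$ precisely because $PI(\overrightarrow{s_2}) = PI(\overrightarrow{s_1})$ — this is exactly why the claim holds regardless of $\varepsilon$, generalizing the closedness half of Lemma~\ref{Lem:CSTSasCLosed}.

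The step I expect to require the most care is making the meaning of ``maximal'' unambiguous and confirming that $\overrightarrow{s_2}$ beats $\overrightarrow{s_1}$ on the length criterion without losing on the $PI$-gap criterion. The equality $PI(\overrightarrow{s_2}) = PI(\overrightarrow{s_1})$ is what collapses these two competing objectives into a single clean contradiction, so the subtlety is harmless; still, it is worth isolating, since it is the maximal-supersequence clause of Definition~\ref{Def:ConstrictedSuperseq}, rather than the $\varepsilon$-threshold alone, that ultimately forces closedness.
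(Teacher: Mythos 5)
Your proof is correct and follows essentially the same route as the paper's: a proof by contradiction in which a hypothetical proper supersequence $\overrightarrow{s_2}$ of the CSTS pattern $\overrightarrow{s_1}$ with $PI(\overrightarrow{s_2}) = PI(\overrightarrow{s_1})$ is shown to violate the maximality clause of Definition~\ref{Def:ConstrictedSuperseq}. In fact, your write-up makes explicit the steps (transitivity of the supersequence relation, transfer of the $\varepsilon$-constraint, and equality of the $PI$ gap) that the paper's proof compresses into the single assertion that Definition~\ref{Def:ConstrictedSuperseq} yields the contradiction.
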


\begin{proof}
    Let us assume that $\overrightarrow{s_1}$ is a CSTS pattern of an ST sequential pattern $\overrightarrow{s}$ given any value of $\varepsilon$. Now, let us assume that there exists a proper maximal supersequence $\overrightarrow{s_2}$ of $\overrightarrow{s_1}$ whose participation index $PI(\overrightarrow{s_2})$ equals the participation index $PI(\overrightarrow{s_1})$, that is $\overrightarrow{s_2}$ is a closure of $\overrightarrow{s_1}$. By Definition~\ref{Def:ClosedST} $\overrightarrow{s_2}$ has to be a closed ST sequential pattern. However, as follows from Definition~\ref{Def:ConstrictedSuperseq} this would contradict that $\overrightarrow{s_1}$ is a CSTS pattern. Hence, CSTS pattern $\overrightarrow{s_1}$ is a closed ST sequential pattern.
\end{proof}

In Theorem~\ref{Thm:CSTSsubsetClosed}, we show that the set of CSTS patterns is a subset of the set of closed ST patterns for any value of the approximation margin  parameter $\varepsilon$.

\begin{theorem}
     For any $\varepsilon \in [0,1]$, the set of all CSTS patterns is a subset of the set of all closed ST sequential patterns. 
     \label{Thm:CSTSsubsetClosed}
\end{theorem}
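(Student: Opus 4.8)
The plan is to reduce the theorem directly to Lemma~\ref{Lem:CSTSisClosed}, which already carries all the substantive content. The statement to be proved is a set-inclusion claim, whereas Lemma~\ref{Lem:CSTSisClosed} is an element-wise claim; the only thing to do is to pass from the latter to the former via the routine argument that picks an arbitrary member of the smaller set and shows it lies in the larger one.

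First I would fix an arbitrary $\varepsilon \in [0,1]$ and let $\overrightarrow{s_1}$ be an arbitrary element of the set of all CSTS patterns. By Definition~\ref{Def:ConstrictedSuperseq}, being a CSTS pattern means that $\overrightarrow{s_1}$ is an $\varepsilon$-constricted maximal supersequence of some ST sequential pattern $\overrightarrow{s}$, i.e. $\overrightarrow{s_1} \in \mathcal{C}^{max}(\overrightarrow{s})$ for at least one $\overrightarrow{s}$. The one point worth stating carefully is the reading of ``the set of all CSTS patterns'' as the union of the sets $\mathcal{C}^{max}(\overrightarrow{s})$ taken over all ST sequential patterns $\overrightarrow{s}$, so that every member of this set is indeed a CSTS pattern of some base pattern and Lemma~\ref{Lem:CSTSisClosed} is applicable to it.

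Next I would invoke Lemma~\ref{Lem:CSTSisClosed}, which asserts that every CSTS pattern is a closed ST sequential pattern for any value of $\varepsilon$. Applying it to $\overrightarrow{s_1}$ gives that $\overrightarrow{s_1}$ is a closed ST sequential pattern, hence $\overrightarrow{s_1}$ belongs to the set of all closed ST sequential patterns. Since both $\overrightarrow{s_1}$ and $\varepsilon$ were arbitrary, this yields the desired inclusion for every $\varepsilon \in [0,1]$, which is exactly the claim of the theorem.

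There is essentially no obstacle here: once Lemma~\ref{Lem:CSTSisClosed} is in hand, the theorem is a one-line corollary, the only remaining content being the quantifier manipulation that turns ``each element has the property'' into ``the set is contained in the set of elements with that property.'' It may be worth remarking, though it is not needed for the statement, that the inclusion is in general strict for $\varepsilon > 0$ (fewer CSTS patterns than closed patterns), with equality holding only at $\varepsilon = 0$ as established in Lemma~\ref{Lem:CSTSasCLosed}.
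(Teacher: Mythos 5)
Your proposal is correct and takes essentially the same route as the paper: both proofs rest entirely on Lemma~\ref{Lem:CSTSisClosed}, lifting the element-wise fact that every CSTS pattern is closed (for any $\varepsilon$) to the desired set inclusion. The paper's version additionally handles $\varepsilon = 0$ via Lemma~\ref{Lem:CSTSasCLosed} and, for $\varepsilon > 0$, distinguishes proper inclusion from equality, but this case analysis is logically redundant for the subset claim, so your streamlined argument loses nothing; note only that your closing side remark that equality holds \emph{only} at $\varepsilon = 0$ is slightly overstated, since (as the paper's own proof acknowledges) the two sets may coincide for $\varepsilon > 0$ on particular datasets.
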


\begin{proof}
    We already showed in Lemma~\ref{Lem:CSTSasCLosed} that for $\varepsilon = 0$, the set of all CSTS patterns is equal to the set of all closed ST sequential patterns. We also presented in Lemma~\ref{Lem:CSTSisClosed} that each CSTS pattern is a closed ST sequential pattern regardless of $\varepsilon$ value. Let us assume that $\varepsilon > 0$. If there exists a closed ST sequential pattern $\overrightarrow{s}$ that has a CSTS pattern and $\overrightarrow{s}$ is not a CSTS pattern itself, then $\overrightarrow{s}$ will not be included in the set of CSTS patterns (in such a case, the CSTS patterns set is a proper subset of the set of closed ST sequential patterns). Otherwise, if each closed ST sequential pattern is also a CSTS pattern, then the set of CSTS patterns is equal to the set of all closed ST sequential patterns. In any case, the CSTS patterns set is a subset of the closed ST sequential patterns set.
\end{proof}

Theorem~\ref{Thm:CSTSsubsetClosed} shows that the number of the discovered CSTS patterns is less than or equal to the number of closed ST sequential patterns. Thus, by discovering of only CSTS patterns one can always obtain as many as all closed ST sequential patterns. 

Lemma~\ref{Thm:ApproxPIofSTPatt} shows how to verify if an ST sequential pattern $\overrightarrow{s}$ is PI-strong given the set of PI-strong CSTS patterns and how to approximate its $PI$ value. 

\begin{lemma} For each ST sequential pattern holds:
\begin{enumerate}[start=1,label=(\roman*).]
    \item An ST sequential pattern $\overrightarrow{s}$ is PI-strong only if there exists a supersequence of $\overrightarrow{s}$ in the set of PI-strong CSTS patterns. 
    \item The participation index value of a PI-strong ST sequential pattern $\overrightarrow{s}$ is equal to or less than $PI(\overrightarrow{s_1}) + \varepsilon$, where $\overrightarrow{s_1}$ is such a minimal proper supersequence of $\overrightarrow{s}$ in PI-strong CSTS patterns, whose $PI(\overrightarrow{s_1})$ is the greatest.
\end{enumerate}
\label{Thm:ApproxPIofSTPatt}
\end{lemma}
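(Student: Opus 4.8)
The plan is to prove the two parts separately, in both cases leaning on the anti-monotonicity of the participation index (Theorem~\ref{theorem:1}) and on condition~1 of Definition~\ref{Def:ConstrictedSuperseq}, which guarantees that any CSTS pattern $\overrightarrow{s_1}$ of $\overrightarrow{s}$ satisfies $PI(\overrightarrow{s_1}) \geq PI(\overrightarrow{s}) - \varepsilon$, equivalently $PI(\overrightarrow{s}) \leq PI(\overrightarrow{s_1}) + \varepsilon$. This single inequality is essentially the engine for part (ii), while part (i) is about locating a witness inside the PI-strong CSTS set.

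For part (i) I would first record the easy converse, since it is what makes the verification usable: if some supersequence $\overrightarrow{s_1}$ of $\overrightarrow{s}$ lies in the PI-strong CSTS set, then $\overrightarrow{s_1}$ is PI-strong and, because $\overrightarrow{s}$ is a subsequence of $\overrightarrow{s_1}$, Theorem~\ref{theorem:1} gives $PI(\overrightarrow{s}) \geq PI(\overrightarrow{s_1}) > \theta$, so $\overrightarrow{s}$ is PI-strong. For the stated ``only if'' direction I would start from a PI-strong $\overrightarrow{s}$ and exhibit the witness directly: by Definition~\ref{Def:ConstrictedSuperseq} the set $\mathcal{C}^{max}(\overrightarrow{s})$ is non-empty, so I pick any $\overrightarrow{s_1} \in \mathcal{C}^{max}(\overrightarrow{s})$, which is by construction a supersequence of $\overrightarrow{s}$ and is itself a CSTS pattern. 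It then remains only to argue that $\overrightarrow{s_1}$ is PI-strong. Here I would use that $\overrightarrow{s}$ itself trivially satisfies the margin constraint $PI(\overrightarrow{s}) \geq PI(\overrightarrow{s}) - \varepsilon$, so that within the PI-strong search space explored by CSTS-Miner the $\varepsilon$-constricted maximal supersequence is selected as the \emph{longest PI-strong} supersequence still meeting the constraint, and hence cannot fall below $\theta$.

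For part (ii) the target $PI(\overrightarrow{s}) \leq PI(\overrightarrow{s_1}) + \varepsilon$ is equivalent to $PI(\overrightarrow{s_1}) \geq PI(\overrightarrow{s}) - \varepsilon$. I would obtain it by comparing the chosen $\overrightarrow{s_1}$ against the CSTS pattern $\overrightarrow{c}$ of $\overrightarrow{s}$ produced in part (i): condition~1 of Definition~\ref{Def:ConstrictedSuperseq} gives $PI(\overrightarrow{c}) \geq PI(\overrightarrow{s}) - \varepsilon$, and $\overrightarrow{c}$ is (in the non-degenerate case) a proper supersequence of $\overrightarrow{s}$ lying in the PI-strong CSTS set, hence a candidate in the pool from which $\overrightarrow{s_1}$ is drawn. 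Because $\overrightarrow{s_1}$ is selected to have the greatest participation index among the shortest such supersequences, and shorter supersequences carry larger $PI$ by Theorem~\ref{theorem:1}, the selection can only improve on $\overrightarrow{c}$, giving $PI(\overrightarrow{s_1}) \geq PI(\overrightarrow{c}) \geq PI(\overrightarrow{s}) - \varepsilon$. Finally, since $\overrightarrow{s_1}$ is a supersequence of $\overrightarrow{s}$, Theorem~\ref{theorem:1} also yields $PI(\overrightarrow{s_1}) \leq PI(\overrightarrow{s})$, sandwiching $PI(\overrightarrow{s})$ within $\varepsilon$ of $PI(\overrightarrow{s_1})$.

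The main obstacle I anticipate lies in two closely related places. In part (i), the PI-strongness of the exhibited witness is not automatic from the bare wording of Definition~\ref{Def:ConstrictedSuperseq}, since an $\varepsilon$-constricted maximal supersequence taken over \emph{all} supersequences could in principle drop below $\theta$; making the argument watertight requires committing to the interpretation that the CSTS pattern is computed within the PI-strong patterns (as CSTS-Miner does in its two-phase, top-down/bottom-up construction). In part (ii), the delicate step is $PI(\overrightarrow{s_1}) \geq PI(\overrightarrow{c})$: anti-monotonicity only relates a pattern to its own sub- and supersequences, whereas $\overrightarrow{s_1}$ and $\overrightarrow{c}$ are merely two supersequences of $\overrightarrow{s}$ that need not be comparable. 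I would close this gap by establishing that $\overrightarrow{c}$ always belongs to the candidate set over which the ``minimal proper supersequence of greatest $PI$'' is chosen, so that the greatest-$PI$ choice cannot yield a value below $PI(\overrightarrow{c})$ and the margin guarantee transfers from $\overrightarrow{c}$ to $\overrightarrow{s_1}$.
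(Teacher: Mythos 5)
Your proof is correct and rests on the same two pillars as the paper's own proof---Theorem~\ref{theorem:1} for part (i) and condition~1 of Definition~\ref{Def:ConstrictedSuperseq} for part (ii)---but your execution differs from, and is tighter than, the paper's. For (i), the paper's entire proof is ``follows immediately from Theorem~\ref{theorem:1}'', which strictly speaking establishes only the converse direction (a supersequence in the PI-strong CSTS set forces $\overrightarrow{s}$ to be PI-strong, by anti-monotonicity); your explicit witness $\overrightarrow{s_1} \in \mathcal{C}^{max}(\overrightarrow{s})$, together with the observation that its PI-strongness is only guaranteed when the constriction is computed within the set of PI-strong patterns (as CSTS-Miner's bottom-up phase in fact does), supplies exactly the content that the one-line citation omits. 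For (ii), the paper argues by cases---if $\overrightarrow{s}$ has a single supersequence in the set, that supersequence must be its CSTS pattern and Definition~\ref{Def:ConstrictedSuperseq} gives the bound directly; if it has several, the paper simply asserts the bound for a minimal proper supersequence---whereas you give a uniform domination argument: the true CSTS pattern $\overrightarrow{c}$ of $\overrightarrow{s}$ lies in the pool from which $\overrightarrow{s_1}$ is drawn, hence $PI(\overrightarrow{s_1}) \geq PI(\overrightarrow{c}) \geq PI(\overrightarrow{s}) - \varepsilon$. You correctly identify this as the delicate step: anti-monotonicity cannot compare $\overrightarrow{s_1}$ and $\overrightarrow{c}$ directly, since neither need be a subsequence of the other, and this is precisely the gap the paper's Case~2 glosses over. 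Note, finally, that your fix only goes through under the reading in which the maximization defining $\overrightarrow{s_1}$ ranges over a pool that actually contains $\overrightarrow{c}$: if one first restricts to minimal-length proper supersequences and only then maximizes $PI$, the pool may exclude a longer $\overrightarrow{c}$ and the claimed bound can genuinely fail; so your insistence on establishing that $\overrightarrow{c}$ belongs to the candidate pool is the load-bearing step rather than a formality, and it is a point the paper's proof never addresses.
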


\begin{proof}~~~~~~~~~~~
    \begin{enumerate}[start=1,label=Ad (\roman*)., ,leftmargin = 4em]
    \item Follows immediately from Theorem~\ref{theorem:1}. 
    \item  \textbf{Case 1.} Let us first assume that there is only one supersequence $\overrightarrow{s_1}$ of $\overrightarrow{s}$ in the set of PI-strong CSTS patterns. In this case, $\overrightarrow{s_1}$ is a $\varepsilon$-constricted maximal ST supersequence of $\overrightarrow{s}$ and by Definition~\ref{Def:ConstrictedSuperseq} $PI(\overrightarrow{s}) \in \Big[PI(\overrightarrow{s_1}), PI(\overrightarrow{s_1}) + \varepsilon$\Big]. \\
    \textbf{Case 2.} Now let us assume that there is more than one proper supersequence of $\overrightarrow{s}$ in the set of PI-strong CSTS patterns. Since we can not indicate which one of them is $\varepsilon$-constricted maximal ST supersequence, then $PI(\overrightarrow{s}) \in \Big[PI(\overrightarrow{s_1}), PI(\overrightarrow{s_1}) + \varepsilon$\Big], where $\overrightarrow{s_1}$ is a minimal proper supersequence from all proper supersequences of $\overrightarrow{s}$ in PI-strong CSTS patterns.
\end{enumerate}
\end{proof}

Lemma~\ref{Thm:ApproxPIofSTPatt} indicates that the set of PI-strong CSTS patterns (unlike the set of all PI-strong closed ST sequential patterns) is not a lossless representation of all PI-strong ST sequential patterns in a sense that the PI value of a PI-strong ST sequential pattern can be obtained from the set of PI-strong CSTS patterns only with a certain approximation. The question is how much the approximation of such PI value of a PI-strong ST sequential pattern differs from its exact PI value. In Lemma~\ref{Lem:MaximalApprox}, we provide the value of such maximal approximation.

\begin{lemma}
     Given an ST sequential pattern $\overrightarrow{s}$ that has a supersequence in the CSTS patterns set one can not:
     \begin{itemize}
         \item underestimate the exact $PI$ value of $\overrightarrow{s}$ by less than $PI(\overrightarrow{s}) - \varepsilon$, and
         \item overestimate the exact $PI$ value of $\overrightarrow{s}$ by more than $PI(\overrightarrow{s}) + \varepsilon$.
     \end{itemize}  
     
     
     \label{Lem:MaximalApprox}
\end{lemma}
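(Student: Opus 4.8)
The plan is to derive both bounds directly from the interval containment established in Lemma~\ref{Thm:ApproxPIofSTPatt}. The key observation is that the only information the set of PI-strong CSTS patterns supplies about $PI(\overrightarrow{s})$ is the membership $PI(\overrightarrow{s}) \in \big[PI(\overrightarrow{s_1}), PI(\overrightarrow{s_1}) + \varepsilon\big]$, where $\overrightarrow{s_1}$ is the minimal proper supersequence of $\overrightarrow{s}$ in the PI-strong CSTS patterns set whose $PI$ value is the greatest. Consequently any admissible approximation of the true value must be drawn from this same interval of width $\varepsilon$, and the whole statement reduces to bounding the distance between two points of that interval.

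First I would fix $\overrightarrow{s_1}$ exactly as in Lemma~\ref{Thm:ApproxPIofSTPatt}(ii) and record the two defining inequalities $PI(\overrightarrow{s_1}) \leq PI(\overrightarrow{s})$ and $PI(\overrightarrow{s}) \leq PI(\overrightarrow{s_1}) + \varepsilon$. For the underestimation bound, note that the smallest value an approximation can report is the lower endpoint $PI(\overrightarrow{s_1})$; rearranging the second inequality gives $PI(\overrightarrow{s_1}) \geq PI(\overrightarrow{s}) - \varepsilon$, so no admissible approximation falls below $PI(\overrightarrow{s}) - \varepsilon$. For the overestimation bound, the largest value an approximation can report is the upper endpoint $PI(\overrightarrow{s_1}) + \varepsilon$; combining this with the first inequality yields $PI(\overrightarrow{s_1}) + \varepsilon \leq PI(\overrightarrow{s}) + \varepsilon$, so no admissible approximation exceeds $PI(\overrightarrow{s}) + \varepsilon$. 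These two facts are precisely the claimed bounds.

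Finally I would dispatch the degenerate case in which $\overrightarrow{s}$ is its own CSTS pattern, which Definition~\ref{Def:ConstrictedSuperseq} explicitly permits: here the $PI$ value is recovered exactly, so both error bounds hold trivially. The main obstacle is not mathematical depth but interpretive precision, namely reading the informal phrases ``underestimate \dots by less than'' and ``overestimate \dots by more than'' as the single assertion that every admissible estimate lies in $\big[PI(\overrightarrow{s}) - \varepsilon,\, PI(\overrightarrow{s}) + \varepsilon\big]$, and then confirming that the width-$\varepsilon$ interval supplied by Lemma~\ref{Thm:ApproxPIofSTPatt} contains both the true value and the estimate, which forces their separation to be at most $\varepsilon$.
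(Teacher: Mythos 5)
Your proof is correct and takes essentially the same approach as the paper's: both arguments bound the two endpoints of the estimation interval $\big[PI(\overrightarrow{s_1}),\, PI(\overrightarrow{s_1}) + \varepsilon\big]$ supplied by Lemma~\ref{Thm:ApproxPIofSTPatt}, using the constraints $PI(\overrightarrow{s}) - \varepsilon \leq PI(\overrightarrow{s_1}) \leq PI(\overrightarrow{s})$. The paper merely phrases this by checking the two extreme cases $PI(\overrightarrow{s_1}) = PI(\overrightarrow{s}) - \varepsilon$ and $PI(\overrightarrow{s_1}) = PI(\overrightarrow{s})$, whereas you carry the inequalities through uniformly for all intermediate values; the substance is identical.
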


\begin{proof}
    Let us assume that $\overrightarrow{s_1}$ is any proper superseuqence of $\overrightarrow{s}$ in the set of CSTS patterns. We will consider the two extreme cases:\\
    \textbf{Case 1.} The value of $PI(\overrightarrow{s_1})$ equals $PI(\overrightarrow{s}) - \varepsilon$. In such a case, one can not underestimate $PI(\overrightarrow{s})$ by less than $PI(\overrightarrow{s}) - \varepsilon$. \\
    \textbf{Case 2.} The value of $PI(\overrightarrow{s_1})$ equals $PI(\overrightarrow{s})$. In such a case, one can not overestimate $PI(\overrightarrow{s})$ by more than $PI(\overrightarrow{s}) + \varepsilon$.\\
    Thus, the estimation of the $PI(\overrightarrow{s})$ value of the pattern $\overrightarrow{s}$ given the set of CSTS patters is always in the range $\Big[PI(\overrightarrow{s}) - \varepsilon, PI(\overrightarrow{s}) + \varepsilon \Big]$.
\end{proof}

\section{The Constricted ST Sequential Patterns Miner}
\label{sec:CSTS-Miner}

This section introduces our algorithm called CSTS-Miner for discovering the set of all PI-strong CSTS patterns. In Table~\ref{Table:Notation}, we present the notation used in the algorithms of this section. CSTS-Miner is presented in Algorithm~\ref{Alg:CSTS-Miner}. The algorithm consists of two phases: i. "top-down" - iterative generation of all PI-strong ST sequential patterns of length $k$ from patterns of length $k - 1$ until it is impossible to generate new patterns; ii. "bottom-up" - calculation of PI-strong CSTS patterns. The "top-down" phase adapts the previously introduced STBFM algorithm for the discovery of PI-strong ST sequential patterns. Specifically, to efficiently generate new PI-strong patterns of length $k$ from PI-strong patterns of length $k-1$, we adapted the SP-tree structure and extended it to the proposed MAX-Tree structure. MAX-Tree is used to not only iteratively generate new patterns but also, unlike SP-Tree offered in \citep{ref1284:Maciag2019}, to identify all PI-strong CSTS patterns. The "bottom-up" phase calculates PI-strong CSTS patterns in a recursive way starting with the set of the longest PI-strong ST sequential patterns $L_k$ obtained in the "top-down" phase.

\begin{table}[h!t]
\caption{Notations and parameters used in the algorithms.}
\small{
\resizebox{\columnwidth}{!}{\begin{tabularx}{\linewidth}{ll}
	\toprule
    \textbf{Notation} & \textbf{Description} \\
    \midrule
    $\mathbf{D}$ & A dataset of spatio-temporal event instances \\
    $\mathbf{F}$ & A set of event types \\
    $\overrightarrow{s}$ & An ST sequential pattern \\
    $\mathbf{N}(e, F)$ & \makecell[l]{Neighborhood of event instance $e$ \\with respect to event type $F \in \mathbf{F}$} \\
    $\mathbf{I}(\overrightarrow{s},i)$ &  \makecell[l]{A set of event instances supporting \\i-th element of $\overrightarrow{s}$} \\
    $PR(\overrightarrow{s}, i)$ & \makecell[l]{Participation ratio of \\i-th element of $\overrightarrow{s}$} \\
    $PI(\overrightarrow{s})$ & Participation index of pattern $\overrightarrow{s}$  \\
    $\text{children}(\overrightarrow{s}), \overrightarrow{s}.\text{children}$ & Children patterns of $\overrightarrow{s}$ \\
    $\text{parent}_1(\overrightarrow{s}), \overrightarrow{s}.\text{parent}_1$ & First parent of pattern $\overrightarrow{s}$ \\
    $\text{parent}_2(\overrightarrow{s}), \overrightarrow{s}.\text{parent}_2$ & Second parent of pattern $\overrightarrow{s}$ \\
    $\mathcal{C}^{max}(\overrightarrow{s}), \overrightarrow{s}.\mathcal{C}^{max}$ & \makecell[l]{A set of $\varepsilon$-constricted maximal \\ supersequences of $\overrightarrow{s}$}\\
    $\mathcal{RC}(\overrightarrow{s}), \overrightarrow{s}.\mathcal{RC}$ & \makecell[l]{Set of ST sequential patterns, \\for which $\overrightarrow{s}$ is a $\varepsilon$-constricted\\ maximal supersequence}\\
    $\theta$ & \makecell[l]{Participation index threshold of \\discovered patterns}\\
    $\varepsilon$ & \makecell[l]{Approximation margin}\\
	\bottomrule
\end{tabularx}}
}
\label{Table:Notation}
\end{table}

\subsection{The "Top-down" Phase of the CSTS-Miner Algorithm}

The "top-down" phase of Algorithm~\ref{Alg:CSTS-Miner} is conducted by executing steps 1 to 18 of this algorithm. In step 1 of Algorithm~\ref{Alg:CSTS-Miner}, a singular candidate ST sequential pattern is generated from each event type in $\mathbf{F}$ and remembered in the set $L_1$ (patterns in $L_1$ constitute the first level of MAX-Tree). By Definitions~\ref{Def:PR} and~\ref{Def:PI}, singular patterns are always PI-strong since their PI values are equal to 1. 

Subsequently, the PI-strong ST sequential patterns of length $2$ are generated and remembered as $L_2$. The generation of such PI-strong ST sequential patterns is conducted in steps 3 to 13 using two nested loops, each of which iterates over all patterns in $L_1$. A new candidate pattern $\overrightarrow{s}$ of length 2 is always generated by concatenating the two event types of singular patterns $\overrightarrow{s_i}$ and $\overrightarrow{s_j}$. We will refer to the patterns $\overrightarrow{s_i}$ and $\overrightarrow{s_j}$ as the first and the second parent of $\overrightarrow{s}$, respectively. Please note that $\overrightarrow{s}$ can consist of two the same event types (in such a case, $\overrightarrow{s_i} = \overrightarrow{s_j}$). The set of instances supporting the second element of $\overrightarrow{s}$ is calculated in step 6 and consists of event instances of type $\overrightarrow{s}[2]$ in $\mathbf{D}$ which belong to neighborhoods of event instances in the set $\mathbf{I}(\overrightarrow{s_i}, 1)$. 

The participation index of the candidate generated pattern $\overrightarrow{s}$ is calculated and verified in steps 7 and 8 of Algorithm~\ref{Alg:CSTS-Miner}. If $\overrightarrow{s}$ occurred to be PI-strong, then $\overrightarrow{s}$ is inserted into the list of children of its first parent $\text{children}(\overrightarrow{s_i})$ and into the set $L_2$.

\begin{algorithm}[h!t]
\small{
	\caption{\protect \Call{CSTS-Miner}{$\mathbf{D}, \mathbf{F}, R, T, \theta, \varepsilon$}}
	\begin{algorithmic}[1]
		\Require $ \mathbf{D} $ - spatio-temporal event dataset, $ \mathbf{F} $ - set of event types, $ R $ - spatial threshold value, $ T $ - time window threshold value, $ \theta $ - participation index threshold value, $\varepsilon$ - approximation margin.
		\Statex \textbf{Assumption:} $L_k$ - a set of PI-strong ST sequential patterns of length $k$.
	    \Ensure - $\bigcup_{k} \big\{ \overrightarrow{s} \in L_k \text{ | } \overrightarrow{s}.\mathcal{RC} \neq \emptyset \lor (\overrightarrow{s}.\mathcal{RC} = \emptyset \land \overrightarrow{s}.\mathcal{C}^{max} = \emptyset)\}$ - a set of PI-strong CSTS patterns.
		\Statex
	    \Statex \{*------------- CSTS-Miner "top-down" phase -------------*\}
		\State $ L_1 :=$ generate patterns of length 1 from all event types in $\mathbf{F}$
		\State $ L_2 := \emptyset$ 
		\For {\textbf{each} $\overrightarrow{s_i} \in L_{1} $} \label{step:L2Begin}
		\For {\textbf{each} $\overrightarrow{s_j} \in L_{1} $}
		\State $\overrightarrow{s} := \overrightarrow{s_i}[1] \rightarrow \overrightarrow{s_j}[1]$ 
		\State $\mathbf{I}(\overrightarrow{s}, 2) :=  \bigcup\limits_{e \in \mathbf{I}(\overrightarrow{s_i}, 1)} N_{\overrightarrow{s}[2]}(e)$
		\State $\overrightarrow{s}.PI := PR(\overrightarrow{s}, 2)$
		\If{$\overrightarrow{s}.PI > \theta$}
		\State $ \overrightarrow{s}.\text{parent}_1 $ := $ \overrightarrow{s_i} $; $ \overrightarrow{s}.\text{parent}_2 $ := $ \overrightarrow{s_j} $
		\State Add $ \overrightarrow{s} $ to $ \overrightarrow{s}.\text{parent}_1.\text{children} $; add $ \overrightarrow{s} $ to $ L_2 $ \label{step:AddtoL2}
		\EndIf
		\EndFor
		\EndFor \label{Step:L2End}
		
		\State $ k := 2 $
		\While {$ L_{k} \ne \emptyset$}
		\State $ k := k + 1$
		\State $ L_{k} :=$ \Call{GenAndVerify}{$ L_{k-1} $} 
		\EndWhile
		
		\Statex
	    \Statex \{*------------- CSTS-Miner "bottom-up" phase -------------*\}
		\While {$ k > 1$}
		    \For{$\overrightarrow{s} \in L_{k}$}
		        \State \Call{VerifySupersequence}{$\protect \overrightarrow{s}, \protect \overrightarrow{s}.\text{parent}_1 $}
		        \State \Call{VerifySupersequence}{$\protect \overrightarrow{s}, \protect \overrightarrow{s}.\text{parent}_2 $}
		    \EndFor 
		\State $ k := k - 1$
		\EndWhile

		\State \Return $\bigcup_{k} \big\{ \overrightarrow{s} \in L_k \text{ | } \overrightarrow{s}.\mathcal{RC}^{max} \neq \emptyset \lor (\overrightarrow{s}.\mathcal{RC}^{max} = \emptyset \land \overrightarrow{s}.\mathcal{C}^{max} = \emptyset)\} $ \Comment{Return all CSTS patterns.}
	\end{algorithmic}
		\label{Alg:CSTS-Miner}
	}
\end{algorithm}

\begin{algorithm}[h!t]
\small{
	\caption{\protect \Call{GenAndVerify}{$ L_{k-1} $}}
	\begin{algorithmic}[1]
		\Require $ L_{k - 1} $ - a set of PI-strong ST sequential patterns of length $ k - 1 $.
		\Ensure $ L_{k} $ - a set of PI-strong ST sequential patterns of length $ k $.
		\State $ L_k := \emptyset $
		\For {\textbf{each} $\overrightarrow{s_i} \in L_{k-1} $} \label{line1}
		\State $\overrightarrow{s_l} := \overrightarrow{s_i}.\text{parent}_2$
		\For {\textbf{each} $ \overrightarrow{s_j} \in \overrightarrow{s_l}.\mathcal{X}$ } \label{line2}
		\State 
		$ \overrightarrow{s} := \overrightarrow{s_i}[1] \rightarrow \overrightarrow{s_i}[2] \rightarrow$ $\dots \rightarrow$ 
		        $\overrightarrow{s_i}[k - 1] \rightarrow  \overrightarrow{s_j}[k - 1] $
		\State $\mathbf{I}(\overrightarrow{s}, k) := \bigcup\limits_{e \in \mathbf{I}(\overrightarrow{s_i},k-1)} N_{\overrightarrow{s}[k]}(e)$ 
		\State $\overrightarrow{s}.PI := $ min$\big(\overrightarrow{s_i}.PI, PR(\overrightarrow{s}, k)\big)$ 
		\If{$\overrightarrow{s}.PI > \theta$}
		\State $ \overrightarrow{s}.\text{parent}_1 $ := $ \overrightarrow{s_i} $; $ \overrightarrow{s}.\text{parent}_2 $ := $ \overrightarrow{s_j} $
		\State Add $ \overrightarrow{s} $ to $ \overrightarrow{s}.\text{parent}1.\text{children} $; add $ \overrightarrow{s} $ to $ L_k $
		\EndIf
		\EndFor 
		\EndFor
		\State \Return $ L_k $
	\end{algorithmic}
		\label{Alg:GenAndVerify}
	}
\end{algorithm}

Steps 15 to 18 of the Algorithm~\ref{Alg:CSTS-Miner} consists of the iterative generation and verification of PI-strong ST sequential patterns of length greater than 2 using the function $\textit{GenAndVerify}{}$ shown in Algorithm~\ref{Alg:GenAndVerify}. Specifically, the $\textit{GenAndVerify}{}$ function generates all PI-strong ST sequential patterns $L_k$ from PI-strong ST sequential patterns $L_{k-1}$. To this end, the function uses the first parent, second parent as well as the children list of patterns in $L_{k-1}$. As follows from Lemma~\ref{Lem:5} (presented in \citep{ref1284:Maciag2019-Kes} as Lemma 5), a candidate ST sequential pattern $\overrightarrow{s}$ of length $\geq 3$ can be obtained by concatenating all elements of its first parent with the last element of its second parent. \footnote{We refer the reader to \citep{ref1284:Maciag2019-Kes} for the proof of Lemma~\ref{Lem:5}.}


\begin{lemma} [Construction of an ST sequential pattern from its first parent and second parent \citep{ref1284:Maciag2019-Kes}]
     Let $ \overrightarrow{s}$ be an ST sequential pattern of length $m \geq 3$ and $ \overrightarrow{s_1}$ be the first parent of $ \overrightarrow{s}$. Then, $ \overrightarrow{s} = \overrightarrow{s_1} \rightarrow F_m$, where $F_m$ is the last element of $\text{parent}_1(\overrightarrow{s})$, and $\text{parent}_2(\overrightarrow{s}) = \text{parent}_2(\overrightarrow{s_1}) \rightarrow F_m $.
     \label{Lem:5}
\end{lemma}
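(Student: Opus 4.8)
The plan is to reduce the statement entirely to the definitions of the first and second parent, which for a pattern of length $m$ are respectively the length-$(m-1)$ prefix and the length-$(m-1)$ suffix obtained by deleting the last, respectively the first, element. Concretely, I would first record the two describing identities $\text{parent}_1(\overrightarrow{s}) = \overrightarrow{s}[1] \rightarrow \overrightarrow{s}[2] \rightarrow \dots \rightarrow \overrightarrow{s}[m-1]$ and $\text{parent}_2(\overrightarrow{s}) = \overrightarrow{s}[2] \rightarrow \overrightarrow{s}[3] \rightarrow \dots \rightarrow \overrightarrow{s}[m]$; both are genuine ST sequential patterns, since any contiguous block of event types is one by Definition~\ref{Def:STPattern}. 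Writing $F_m := \overrightarrow{s}[m]$ for the final element of $\overrightarrow{s}$, these two identities are all the machinery the argument needs.

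For the first claim, since $\overrightarrow{s_1} = \text{parent}_1(\overrightarrow{s}) = \overrightarrow{s}[1] \rightarrow \dots \rightarrow \overrightarrow{s}[m-1]$, appending $F_m = \overrightarrow{s}[m]$ merely restores the deleted last element, so that $\overrightarrow{s_1} \rightarrow F_m = \overrightarrow{s}[1] \rightarrow \dots \rightarrow \overrightarrow{s}[m-1] \rightarrow \overrightarrow{s}[m] = \overrightarrow{s}$. This is immediate once the prefix description of $\text{parent}_1$ is in hand.

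For the second claim I would compare $\text{parent}_2(\overrightarrow{s})$ and $\text{parent}_2(\overrightarrow{s_1}) \rightarrow F_m$ position by position. Because $\overrightarrow{s_1}$ has length $m-1$ and agrees with $\overrightarrow{s}$ on its first $m-1$ coordinates, its suffix is $\text{parent}_2(\overrightarrow{s_1}) = \overrightarrow{s_1}[2] \rightarrow \dots \rightarrow \overrightarrow{s_1}[m-1] = \overrightarrow{s}[2] \rightarrow \dots \rightarrow \overrightarrow{s}[m-1]$. Appending $F_m = \overrightarrow{s}[m]$ yields $\overrightarrow{s}[2] \rightarrow \dots \rightarrow \overrightarrow{s}[m-1] \rightarrow \overrightarrow{s}[m]$, which is exactly $\text{parent}_2(\overrightarrow{s})$. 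The hypothesis $m \geq 3$ is what guarantees that $\overrightarrow{s_1}$ has length at least $2$, so that $\text{parent}_2(\overrightarrow{s_1})$ is nonempty and the concatenation is well posed.

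I do not expect a genuine obstacle, as the lemma is pure bookkeeping about prefixes and suffixes; the only point requiring care is index alignment — verifying that the suffix of the prefix $\overrightarrow{s_1}$ lines up precisely with the first $m-2$ entries of the suffix of $\overrightarrow{s}$, and that the roles assigned to $\text{parent}_1$ and $\text{parent}_2$ match the operational construction used in Algorithm~\ref{Alg:GenAndVerify}, where a length-$k$ pattern is built as $\overrightarrow{s_i}[1] \rightarrow \dots \rightarrow \overrightarrow{s_i}[k-1] \rightarrow \overrightarrow{s_j}[k-1]$ with $\overrightarrow{s_i}$ the first parent and $\overrightarrow{s_j}$ the second parent.
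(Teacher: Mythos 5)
Your proof is correct, but note that the paper itself contains no proof of this statement to compare against: Lemma~\ref{Lem:5} is quoted from \citep{ref1284:Maciag2019-Kes}, and the footnote explicitly defers its proof to that reference. Your bookkeeping argument is exactly the natural one once the first and second parent are characterized as the length-$(m-1)$ prefix and suffix, and your use of $m \geq 3$ to ensure $\text{parent}_2(\overrightarrow{s_1})$ is nonempty is the right observation. Two remarks. First, you silently corrected what appears to be a typo in the statement: as written, it says $F_m$ is the last element of $\text{parent}_1(\overrightarrow{s})$, which would be $\overrightarrow{s}[m-1]$ and would make both identities false in general; your reading $F_m = \overrightarrow{s}[m]$ (equivalently, the last element of $\text{parent}_2(\overrightarrow{s})$, matching the construction $\overrightarrow{s_i}[1] \rightarrow \dots \rightarrow \overrightarrow{s_i}[k-1] \rightarrow \overrightarrow{s_j}[k-1]$ in Algorithm~\ref{Alg:GenAndVerify}) is the only one under which the lemma holds, and flagging this explicitly would strengthen the write-up. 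Second, your proof takes the prefix/suffix identities as the \emph{definition} of the two parents, whereas in this paper the parents are only assigned operationally inside Algorithms~\ref{Alg:CSTS-Miner} and~\ref{Alg:GenAndVerify}; under that operational reading, the prefix/suffix characterization is itself a claim requiring a short induction on pattern length (base case $m=2$ from steps 3--13 of Algorithm~\ref{Alg:CSTS-Miner}, inductive step from the candidate construction in Algorithm~\ref{Alg:GenAndVerify}). You acknowledge this alignment issue at the end but do not carry out the induction; doing so would close the only real gap between your argument and a fully self-contained proof.
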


Algorithm~\ref{Alg:GenAndVerify} proceeds as follows. First the set $L_k$ is initialized. Subsequently, the loop in step 2 iterates over all patterns in $\overrightarrow{s_i} \in L_{k-1}$ ($\overrightarrow{s_i}$ is the first parent of a new candidate pattern $\overrightarrow{s}$) and the loop in step 4 iterates over all children patterns $\overrightarrow{s_j} \in \text{parent}_2(\overrightarrow{s_i}))$ of the second parent of $\overrightarrow{s_i}$. Each of such $\overrightarrow{s_j}$ child patterns constitutes the second parent of a new candidate pattern $\overrightarrow{s}$. Thus, the elements of $\overrightarrow{s}$ are: $ \overrightarrow{s} := \overrightarrow{s_i}[1] \rightarrow \overrightarrow{s_i}[2] \rightarrow$ $\dots \rightarrow \overrightarrow{s_i}[k - 1] \rightarrow  \overrightarrow{s_j}[k - 1] $.

After the elements of $\overrightarrow{s}$ are obtained, the set of instances supporting its last element is computed according to step 6 of Algorithm~\ref{Alg:GenAndVerify} and the $PI$ value of $\overrightarrow{s}$ is calculated according to step 8 of Algorithm~\ref{Alg:GenAndVerify}. If the $PI$ value is greater than the participation index threshold $\theta$, then $\overrightarrow{s}$ is inserted to $L_{k}$ and appended to the children list of $\overrightarrow{s_i}$: $\text{children}(\overrightarrow{s_i})$. Otherwise, $\overrightarrow{s}$ is discarded as not being PI-strong pattern.

The generation of new candidate ST sequential patterns ends when the $\textit{GenAndVerify}{}$ function can not generate any new patterns.

\subsection{The "Bottom-up" Phase of the CSTS-Miner Algorithm}

\begin{algorithm}[h!t]
\small{
	\caption{\protect \Call{VerifySupersequence}{$\protect\overrightarrow{s}, \protect\overrightarrow{s_i}$}}
	\begin{algorithmic}[1]
		\Require $\overrightarrow{s}$ - PI-strong ST sequential pattern, whose $\mathcal{RC}^{max}$ set needs to be obtained.

		\If{$\overrightarrow{s}.PI \geq \overrightarrow{s_i}.PI - \epsilon$} 
		\If{$\overrightarrow{s_i} \notin \overrightarrow{s}.\mathcal{RC}^{max} $}  
		\If{$\overrightarrow{s_i}.\mathcal{C}^{max} = \emptyset \lor \text{len}(\overrightarrow{s}) = \text{len}(\overrightarrow{s_i}.\mathcal{C}^{max})$}
	
		\If{$\overrightarrow{s_i}.\mathcal{C}^{max} = \emptyset \lor \overrightarrow{s}.PI = PI(\overrightarrow{s_i}.\mathcal{C}^{max})$}
		    \State Insert $\overrightarrow{s}$ to $\overrightarrow{s_i}.\mathcal{C}^{max}$
		    \State Insert $\overrightarrow{s_i}$ to $\overrightarrow{s}.\mathcal{RC}^{max}$
		\ElsIf {$\overrightarrow{s}.PI > PI(\overrightarrow{s_i}.\mathcal{C}^{max})$}
		    \State \begin{varwidth}[t]{\linewidth}
		    Remove $\overrightarrow{s_i}$ from \par
		        \hskip\algorithmicindent  $\overrightarrow{s_k}.\mathcal{RC}^{max}$ of \textbf{all} $\overrightarrow{s_k} \in \overrightarrow{s_i}.\mathcal{C}^{max}$
		    \end{varwidth}
		    \vspace{0.1cm}
		    \State $\overrightarrow{s_i}.\mathcal{C}^{max} := \emptyset$
		    \State Insert $\overrightarrow{s}$ to $\overrightarrow{s_i}.\mathcal{C}^{max}$
		    \State Insert $\overrightarrow{s_i}$ to $\overrightarrow{s}.\mathcal{RC}^{max}$
		\EndIf
		
		\EndIf
		\EndIf
		\If{$\overrightarrow{s_i}.\text{parent}_1 \neq$ NULL}
		\State \Call{VerifySupersequence}{$\protect\overrightarrow{s}, \protect \overrightarrow{s_i}.\text{parent}_1$}
		\EndIf
		\If{$\overrightarrow{s_i}.\text{parent}2 \neq$ NULL}
		\State \Call{VerifySupersequence}{$\protect\overrightarrow{s}, \protect \overrightarrow{s_i}.\text{parent}_2$}
		\EndIf

		\EndIf
		\State \Return 
	\end{algorithmic}
	\label{Alg:ClosureCalculate}
	}
\end{algorithm}

While the "top-down" phase generates all PI-strong ST sequential patterns, the "bottom-up" phase of Algorithm~\ref{Alg:CSTS-Miner} starts in step 19 and is entirely dedicated to calculation of these PI-strong ST sequential patterns which are PI-strong CSTS patterns. The verification of PI-strong ST sequential patterns as being CSTS patterns starts from the set $L_k$ of the longest patterns and is iteratively continued for the subsequent sets of patterns $L_{k-1}, L_{k-2}, \dots, L_2$. 


The function $\textit{VerifySupersequence}{}$ presented in Algorithm~\ref{Alg:ClosureCalculate} receives two ST sequential patterns $\overrightarrow{s}$ and $\overrightarrow{s_i}$ and verifies if $\overrightarrow{s}$ is a CSTS pattern of $\overrightarrow{s_i}$. To this end, function $\textit{VerifySupersequence}{}$ applies Definition~\ref{Def:ConstrictedSuperseq} and Theorem~\ref{theorem:1}. Specifically, function $\textit{VerifySupersequence}{}$ conducts the following steps:
\begin{enumerate} 
    \item Checks if the $PI$ value of $\overrightarrow{s}$ is greater than the $PI$ value of $\overrightarrow{s_i}$ minus approximation margin $\varepsilon$ (in step~1). This fulfils the first condition of Definition~\ref{Def:ConstrictedSuperseq}.
    \item Checks if $\overrightarrow{s_i}$ already belongs to the $RC^{max}$ list of $\overrightarrow{s}$ (in step~2). Due to construction of MAX-Tree, it can happen that the function $\textit{VerifySupersequence}{}$ will be invoked multiple times for the same two sequences $\overrightarrow{s}$ and $\overrightarrow{s_i}$. Thus, the check prevents the situation when $\overrightarrow{s_i}$ is added multiple times to the list $RC^{max}(\overrightarrow{s})$.
    \item Verifies whether there is no pattern of $\overrightarrow{s_i}$ in the set $C^{max}(\overrightarrow{s_i})$ or the length of $\overrightarrow{s}$ equals the length of the patterns in $C^{max}(\overrightarrow{s_i})$. In such a case, the two situations are possible:
    \begin{itemize}
        \item Either $C^{max}(\overrightarrow{s_i}) = 0$ or the $PI$ value of $\overrightarrow{s}$ equals the $PI$ values of $C^{max}(\overrightarrow{s_i})$ patterns. In any case,  $\overrightarrow{s}$ is inserted to $C^{max}(\overrightarrow{s_i})$ and $\overrightarrow{s_i}$ is inserted to $\mathcal{RC}^{max}(\overrightarrow{s_i})$. 
        \item Alternatively, if the $PI$ value of $\overrightarrow{s}$ is greater than the $PI$ values of patterns in $C^{max}(\overrightarrow{s_i})$, then $\overrightarrow{s_i}$ is removed from the $\mathcal{RC}^{max}$ lists of all patterns in $C^{max}(\overrightarrow{s_i})$ and $C^{max}(\overrightarrow{s_i})$ is set to be empty. Next, $\overrightarrow{s}$ is added to $C^{max}(\overrightarrow{s_i})$ and $\overrightarrow{s_i}$ is added to $\mathcal{RC}^{max}(\overrightarrow{s})$. This set fulfils the second condition of Definition~\ref{Def:ConstrictedSuperseq}.
    \end{itemize}
\end{enumerate}

If the $PI$ value of $\overrightarrow{s}$ is greater than the $PI$ value of $\overrightarrow{s_i}$ minus approximation margin $\varepsilon$, then $\textit{VerifySupersequence}{}$ function is recursively invoked for $\overrightarrow{s}$ and the parent patterns $\text{parent}_1(\overrightarrow{s_i})$, $\text{parent}_2(\overrightarrow{s_i})$ of $\overrightarrow{s_i}$ to verify whether $\overrightarrow{s}$ is also their CSTS pattern. Otherwise, as follows from Theorem~\ref{theorem:1}, $\overrightarrow{s}$ can not be a CSTS pattern of any of parent patterns $\text{parent}_1(\overrightarrow{s_i})$, $\text{parent}_2(\overrightarrow{s_i})$ of $\overrightarrow{s_i}$. Thus, invokes of $\textit{VerifySupersequence}{\overrightarrow{s}, \text{parent}_1(\overrightarrow{s_i})}$ and $\textit{VerifySupersequence}{\overrightarrow{s}, \text{parent}_2(\overrightarrow{s_i})}$ are skipped. 

\begin{landscape}
\begin{figure*}[h!t]
	\centering 
	\includegraphics[width=1.0\linewidth]{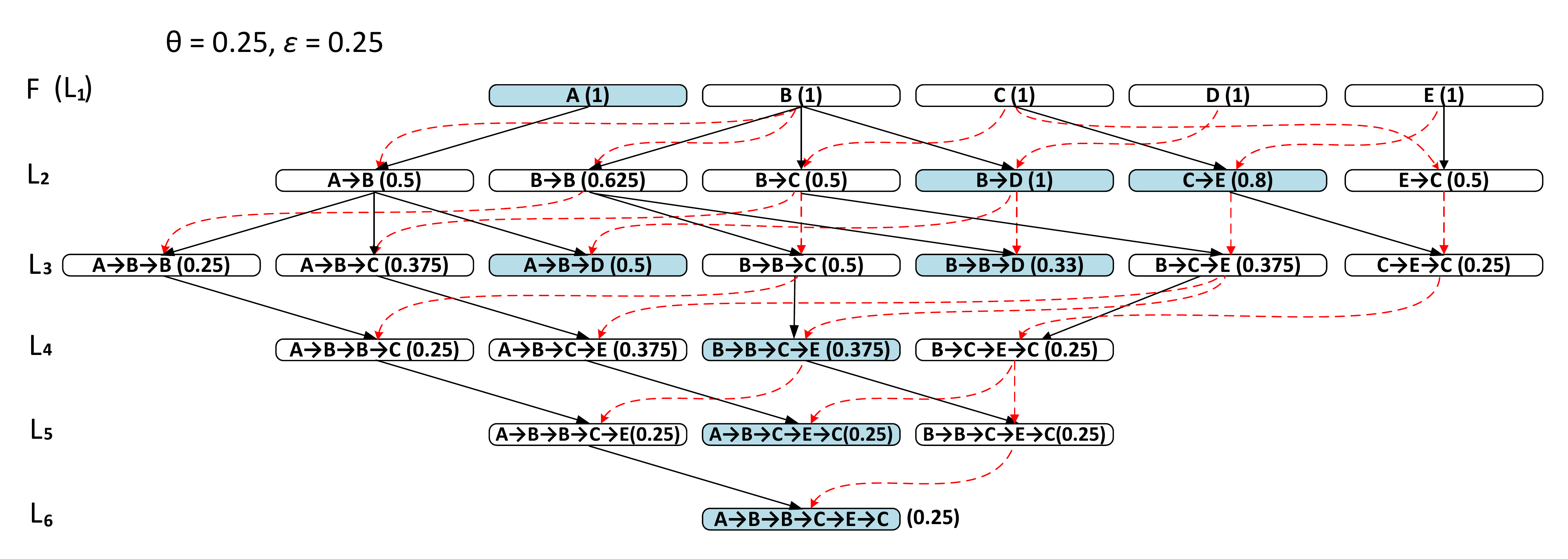}
	\caption{The complete MAX-Tree generated with parameters $\theta = 0.25$ and $\varepsilon = 0.25$ for the dataset presented in Figure~\ref{Fig:Dataset} (all boxes represent PI-strong ST sequential patterns, while blue boxes represent patterns being also PI-strong CSTS patterns; PI values are given in parentheses next to the patterns).}
	\label{Fig:CompleteMAX-Tree}      
\end{figure*}
\end{landscape}

The complete MAX-Tree created for the dataset shown in Figure~\ref{Fig:Dataset} is presented in Figure~\ref{Fig:CompleteMAX-Tree}. The tree is created by Algorithm~\ref{Alg:CSTS-Miner} using the following input parameters: spatial threshold value $R = 10$, time window threshold value $T = 20$, participation index threshold value $\theta = 0.25$, approximation margin value $\varepsilon = 0.25$. All patterns in the tree are PI-strong ST sequential patterns; however, only blue boxes represent those of them, which are also PI-strong CSTS patterns. 

For the pattern $\overrightarrow{s} = B \rightarrow B \rightarrow C$, the set of its CSTS patterns is $\mathcal{C}^{max}(\overrightarrow{s}) = \{A \rightarrow B \rightarrow B \rightarrow C \rightarrow E \rightarrow C\}$, while the minimal proper supersequence of $\overrightarrow{s}$ among the set of PI-strong CSTS patterns returned by Algorithm~\ref{Alg:CSTS-Miner} is $\overrightarrow{s_1} = B \rightarrow B \rightarrow C \rightarrow E$. Let us consider how one can approximate the participation index value of pattern $\overrightarrow{s} = B \rightarrow B \rightarrow C$ given the set of all PI-strong CSTS patterns presented in Figure~\ref{Fig:CompleteMAX-Tree}. Since the set of PI-strong CSTS patterns contains more than one supersequence of $\overrightarrow{s}$, then the value of participation index of $PI(\overrightarrow{s)} \leq PI(\overrightarrow{s_1}) + \varepsilon = 0.375 + 0.25 = 0.625$. In fact, the exact PI value of $\overrightarrow{s}$ equals $ PI(\overrightarrow{s}) = 0.5$.

\section{Experiments}
\label{sec:ExperimentalEvaluation}

In this section, we first review datasets used for the experiments and describe our experimental setup. Then we provide results of the comparison of the proposed CSTS-Miner algorithm with the STS-Miner \citep{ref1284:Huang2008}, CST-SPMiner \citep{ref1284:Maciag2019-Kes}, STBFM \citep{ref1284:Maciag2019-Kes} and CSTPM \citep{ref1284:Mohan2012} algorithms. 

\subsection{Selected Dataset}

For the experiments with the proposed CSTS-Miner algorithm we selected two publicly available datasets, each of which consists of crime event incidents. 

\subsubsection{Pittsburgh Police Incident Blotter Dataset}

The first dataset selected for the experiments is \textit{Pittsburgh Police Incident Blotter Dataset} that contains crime incidents collected by the Police Department of Pittsburgh City over the period 31.12.1989 - 31.12.2019  \citep{ref1284:PittsburghCrimeData}. The dataset was validated according to the Uniform Crime Reporting (UCR) standards \citep{ref1284:UniformCrimeReporting} and consists of such attributes as: \textit{incident time}, \textit{incident location} (defined by street name and number), \textit{incident neighborhood}, \textit{incident type}, \textit{description of offense}, incident \textit{longitude} and \textit{latitude} locations. For the purposes of experiments we selected the following attributes of the dataset: \textit{incident type}, \textit{incident time}, \textit{longitude} and \textit{latitude} which are directly used by the proposed CSTS-Miner algorithm. In the experiments, we use only crime incidents reported between 01.01.2017 and 31.12.2019. The number of crime incidents reported over this period is \num{122895}; however, approximately 40\% of them contain missing values for one of the selected attributes: incident type, geographical location or incident time. Thus, we decided to remove such incidents. The resultant dataset contains \num{72867} crime event incidents of 236 unique incident types. All of these 236 incident types are selected as event type set \textbf{F}. To the most frequent incident types belong: \textit{theft from auto}, \textit{simple assault}, \textit{public drunkenness}, \textit{criminal mischief} or \textit{harassment}. The attribute \textit{incident time} (which specifies incident occurrence time with exact occurrence date and a timestamp given in hours, minutes and seconds) is transformed into the number of minutes that passed from the timestamp \textit{01.01.2017 00:00}. Thus, the time window parameter $T$ used by the CSTS-Miner algorithm is specified in minutes.  In Table~\ref{Table:PittsburgCharacteristic}, we present the characteristic of the resultant dataset.

In Figure~\ref{Fig:PittsburghDataset}, we present the location of the first two thousand crime incidents of ten most frequent incident types from the resultant Pittsburgh Police Incident Blotter Dataset. 

\begin{table}[h!t]
\caption{The characteristic of the resultant Pittsburgh Police Incident Blotter Dataset.}
\resizebox{\columnwidth}{!}{\begin{tabularx}{\linewidth}{lll}
	\toprule
    \textbf{Parameter} & \textbf{Value} \\
    \midrule
    No. of incident types (|\textbf{F}|) & 236 \\
    No. of incident instances (|\textbf{D}|) & 72~867 \\
    Avg. no. of incidents per type & 309 \\
    Median no. of incidents per type & 26 \\
    Std. of no. of incidents per type & 784 \\
    Min. no. of instances in a type & 1 \\
    Max. no. of instances in a type & 6201 \\
	\bottomrule
\end{tabularx}}
\label{Table:PittsburgCharacteristic}
\end{table}

\begin{figure*}[h!t]
	\centering 
	\includegraphics[width=1\textwidth]{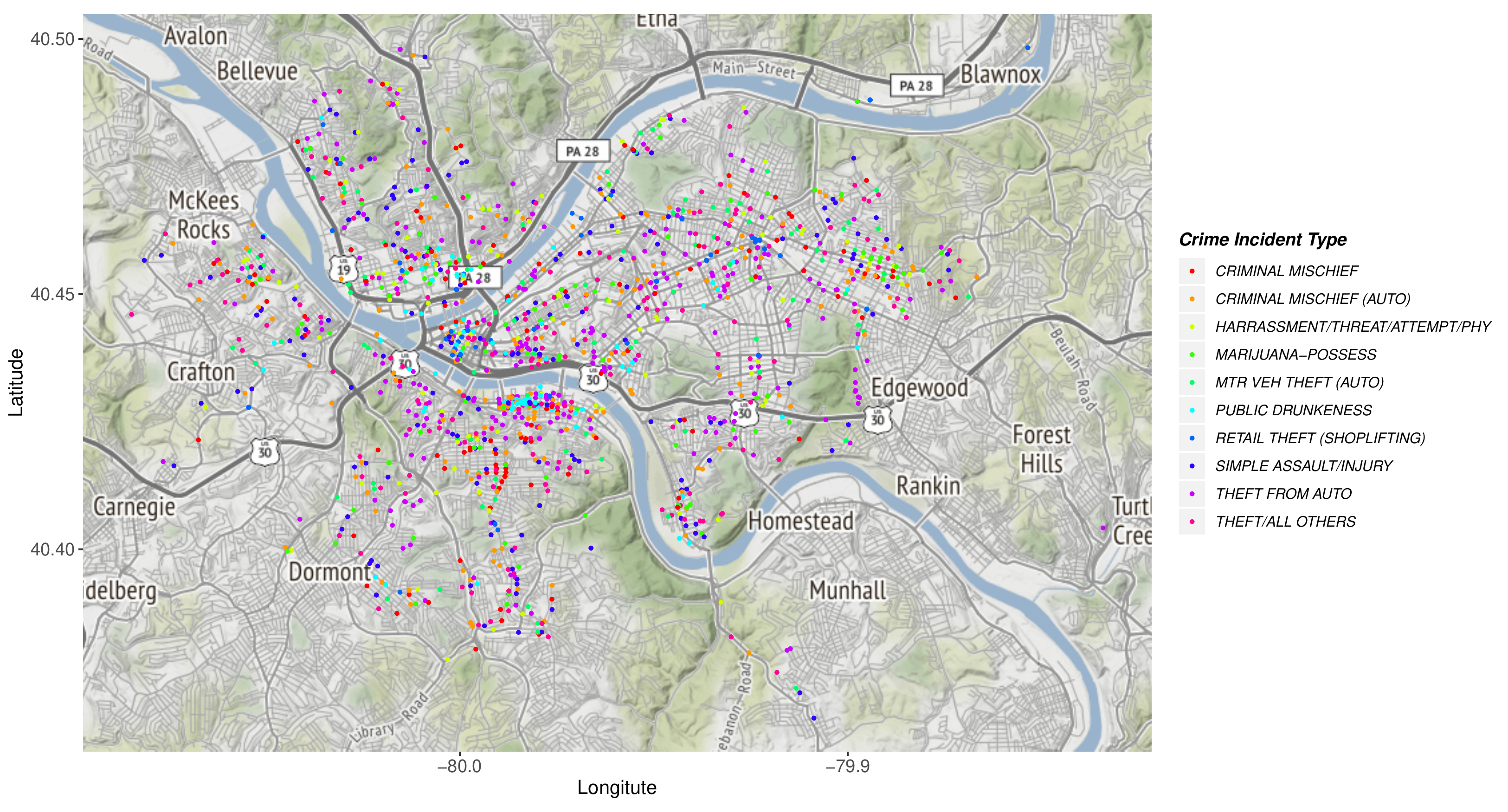}
	\caption{The first two thousand crime incidents of ten most frequent incident types from the resultant Pittsburgh Police Incident Blotter Dataset.}
	\label{Fig:PittsburghDataset}      
\end{figure*}

\subsubsection{Boston Crime Incident Reports Dataset}

The second of the selected datasets is \textit{Boston Crime Incidents Reports Dataset} provided by the Boston Police Department \citep{ref1284:BostonCrimeIncidents}. The dataset was collected over the period 08.07.2012 - 10.08.2015. However, in the experiments, we extracted only crime incidents that occurred between 01.01.2014 - 31.12.2014. The Boston Crime Incidents Reports Dataset contains several attributes, such as: \textit{incident location}, \textit{incident time}, \textit{incident type}, \textit{used weapon type} (such as, for example, unarmed or firearm) \textit{shooting presence}, \textit{police shift} or \textit{occurrence district} and \textit{occurrence area}. Similarly to the Pittsburgh Police Incident Blotter Dataset, for the experiments we selected only attributes: \textit{incident location} (given by longitute and latitude), \textit{incident time} and \textit{incident type}. Since the attribute \textit{incident type} does not contain only crime incident types, but also other incident types, such as \textit{medical assist} or \textit{property found}, we preprocessed the dataset to obtain incidents of all 26 crime event types present in the dataset. The examples of event types are: \textit{aggravated assault}, \textit{arson}, \textit{auto theft} or \textit{drug charges}. 

Additionally, to better analyze the results that can be obtained with CSTS-Miner, we selected incidents of only the ten least frequent crime types in the complete dataset to create the \textit{reduced} dataset. These crime types are \textit{violation of liquor laws, operating under influence, manslaughter, homicide, harassment, gambling offense, embezzlement, crimes against children, bomb, arson}. 

In Table~\ref{Table:BostonCharacteristic}, we present the characteristic of the complete dataset, while in Table~\ref{Table:BostonCharacteristicReduced} the characteristic of the reduced dataset is shown. Since there are only 26 crime event types in the complete dataset, we decide to present the number of incidents of each type in the histogram shown in Figure~\ref{Fig:BostonHistogram}.

\begin{table}[h!t]
\caption{The characteristic of the \textit{complete} Boston Crime Incident Reports Dataset.}
\resizebox{\columnwidth}{!}{\begin{tabularx}{\linewidth}{lll}
	\toprule
    \textbf{Parameter} & \textbf{Value} \\
    \midrule
    No. of incident types (|\textbf{F}|) & 26 \\
    No. of incident instances (|\textbf{D}|) & 40~545 \\
    Avg. no. of incidents per type & 1559 \\
    Median no. of incidents per type & 527 \\
    Std. of no. of incidents per type & 2176 \\
    Min. no. of instances in a type & 1 \\
    Max. no. of instances in a type & 8575 \\
	\bottomrule
\end{tabularx}}
\label{Table:BostonCharacteristic}
\end{table}

\begin{table}[h!t]
\caption{The characteristic of the \textit{reduced} Boston Crime Incident Reports Dataset.}
\resizebox{\columnwidth}{!}{\begin{tabularx}{\linewidth}{lll}
	\toprule
    \textbf{Parameter} & \textbf{Value} \\
    \midrule
    No. of incident types (|\textbf{F}|) & 10 \\
    No. of incident instances (|\textbf{D}|) & 896 \\
    Avg. no. of incidents per type & 90 \\
    Median no. of incidents per type & 72 \\
    Std. of no. of incidents per type & 83 \\
    Min. no. of instances in a type & 1 \\
    Max. no. of instances in a type & 245 \\
	\bottomrule
\end{tabularx}}
\label{Table:BostonCharacteristicReduced}
\end{table}

\begin{figure*}[h!t]
	\centering 
	\includegraphics[width=0.95\textwidth]{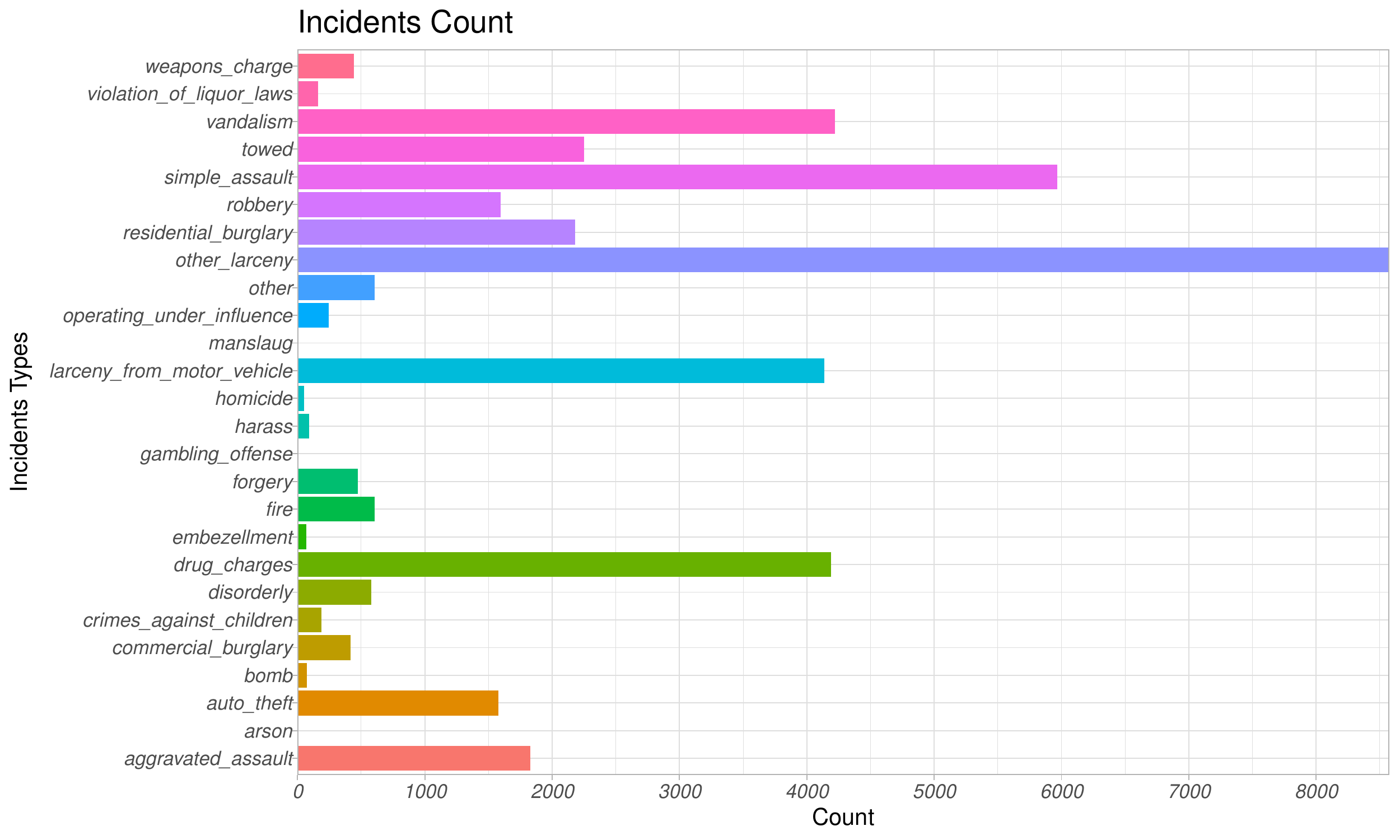}
	\caption{Incidents count per crime type in the \textit{complete} Boston Crime Incident Reports Dataset used for experiments.}
	\label{Fig:BostonHistogram}      
\end{figure*}

\subsection{Experimental Setup}

In our experiments, the spatial distance threshold $R$ of a neighborhood of an event instance $e$ is specified in meters. However, the locations of event instances in the obtained datasets are specified using the longitude and latitude coordinates. Thus, we apply the following procedure to transform the distance between two event instances $e_1, e_2 \in \mathbf{D}$ into meters. First, each coordinate (either longitude or latitude) of an instance $e$ is converted into radians according to Eq.~(\ref{Eq:DistanceRads}) in which, $e.lat$ refers to the latitude coordinate and $e.lon$ refers to the longitude coordinate of instance $e$, respectively. Next, the distance in meters between two instances is obtained according to Eq.~(\ref{Eq:Distance}). In Eq.~(\ref{Eq:Distance}), $\text{earthRadius}$ denotes the radius of Earth in kilometers. 

\begin{align}
    &  rad_{e.lat} = \frac{e.lat \cdot \pi}{180};~~rad_{e.lon} = \frac{e.lon \cdot \pi}{180}.  
 \label{Eq:DistanceRads}
\end{align}

\begin{align}
    & D_{slat} = \sin\big((rad_{e_2.lat} - rad_{e_1.lat}) / 2\big). \nonumber \\ 
    & D_{slon} = \sin\big((rad_{e_2.lon} - rad_{e_1.lon}) / 2\big). \nonumber \\ 
    & Dist_{(e_1, e_2)} = 2 \cdot \text{earthRadius} \cdot 1000~\cdot \nonumber \\
    & \arcsin\Big(\sqrt{D_{slat}^2 + D_{slon}^2 \cdot \cos(rad_{e_1.lat}) \cdot \cos(rad_{e_2.lat}})\Big). 
 \label{Eq:Distance}
\end{align}


The implementations of all algorithms selected for experiments are prepared in C++. We ran experiments using a computer equipped with Apple M1 processor and 16 GB of RAM memory. Our implementations of the algorithms (STS-Miner, STBFM, CSTPM, CST-SPMiner, CSTS-Miner) are available at the GitHub repository \footnote{https://github.com/piotrMaciag32/CSTS-Miner}. 

\subsection{Results of Experiments}

In the experiments, we compare our proposed CSTS-Miner with the four other algorithms: STS-Miner applying participation index measure \citep{ref1284:Huang2008}, STBFM \citep{ref1284:Maciag2019}, CSTPM \citep{ref1284:Mohan2012} which all discover PI-strong ST sequential patterns and  CST-SPMiner \citep{ref1284:Maciag2019-Kes} discovering PI-strong closed ST sequential patterns.

We aim to measure the number of discovered patterns by each of the algorithms and their computation times for each of the selected datasets. The main obtained results for each dataset are presented in Tables~\ref{Table:comptime1},~\ref{Table:comptime2} and~\ref{Table:comptime3}. The results presented in these tables were obtained for the following input parameters of the five compared algorithms:
\begin{itemize}
    \item For the Pittsburgh Police Incident Blotter Dataset: $R$ = 350 meters, $T$ = 11520 (8 days), $\theta$ = $\{0.33, 0.01, \dots, 0.25\}$, $\varepsilon$ = $\{0.025, 0.05, 0.075\}$.
    \item For the \textit{complete} Boston Crime Incidents Report Dataset: $R$ = 300 meters, $T$ = 5760 minutes (4 days), $\theta$ = $\{0.055, 0.05, \dots, 0.015\}$, $\varepsilon$ = $\{0.05, 0.1, 0.15\}$.
    \item For the \textit{reduced} Boston Crime Incidents Report Dataset: $R$ = 500 meters, $T$ = 43200 minutes (30 days), $\theta = \{0.01, 0.0095, \dots, 0.005\}$, $\varepsilon = \{0.05, 0.1, 0.15\}$.
\end{itemize}

Table~\ref{Table:comptime1} presents the results for the Pittsburgh Police Incident Blotted dataset. As it can be noted from the table, CSTS-Miner can discover much fewer patterns for all three selected values of its approximation margin $\varepsilon$ parameter than the other four selected algorithms. For example, for the participation index threshold $\theta = 0.26$, there are \num{143666} PI-strong ST sequential patterns and \num{100235} PI-strong closed ST sequential patterns, while the number of PI-strong CSTS patterns discovered for the approximation threshold value $\varepsilon = 0.1$ is only \num{58519}. As it can be noted from Table~\ref{Table:comptime1}, the increasing values of approximation margin $\varepsilon$ can result in a significant increase of computation times of CSTS-Miner. For example, for $\theta = 0.25$ STBFM and CST-SPMiner both executed in 207 seconds, while CSTS-Miner for $\varepsilon = 0.025$ parameter executed in 253 seconds and for $\varepsilon = 0.075$ parameter executed in 1744. Please note that for $\theta = 0.25$ computations for STS-Miner and CSTPM were too long to be obtained.


The slightly different results were presented in Table~\ref{Table:comptime2} for the \textit{complete} Boston Crime Incident Report dataset. In the case of this dataset, it was possible to obtain a similar reduction in the number of discovered patterns as in the case of the Pittsburgh Police Incident Blotted dataset. For example, for the participation index threshold $\theta$ equal to 0.015, the numbers of PI-strong ST sequential patterns and PI-strong closed ST sequential patterns are \num{2819490} and \num{2040303}, respectively. For the same value of $\theta$ parameter and $\varepsilon = 0.15$, CSTS-Miner is able to provide as many as \num{1171955} PI-strong CSTS patterns. Thus, the reduction in the number of patterns is 58\% when compared to the STBFM algorithm as well as 43\% when compared to the CST-SPMiner algorithm. 

Finally, in Table~\ref{Table:comptime3} we present the results obtained for the \textit{reduced} Boston Crime Incident Report dataset. The reduction in the number of discovered patterns is even more impressive in the case of this dataset. For the values of participation index threshold $\theta = 0.005$ and approximation margin $\varepsilon = 0.15$, CSTS-Miner provided \num{65899} PI-strong CSTS patterns. For the same $\theta$ value, the numbers of PI-strong ST sequential patterns and PI-strong closed ST sequential patterns discovered are \num{228285} and \num{76894} patterns, respectively. Thus, the reduction in the number of discovered patterns by CSTS-Miner when compared to STS-Miner, CSTPM, STBFM is 71\% and when to compared to CST-SPMiner is 24\%. However, please note that the computation time for the reduced dataset can be significantly higher in the case of the CSTS-Miner algorithm than in the cases of the STBFM and CST-SPMiner algorithms. Please also note that for the reduced Boston Crime Incident Report dataset, CSTPM executes much longer than the other algorithms.

In Figure~\ref{Fig:PercentSTBFM}, we provide the plots presenting the percent of the number of discovered PI-strong CSTS patterns to the number of discovered PI-strong ST sequential patterns. As it can be noted from the presented plots, the percent ranges between 15\% and 90\%. It was possible to obtain minimal values of the percent (around 15\%) for the reduced Boston Crime Incidents Report dataset when the spatial threshold $R = 600$ meters as well as temporal window $T = 28800$ minutes were applied and the $\theta$ threshold was equal to the values in the range $0.29 - 0.28$. In the case of both Pittsburgh Crime Incident Blotter and complete Boston Crime Incidents datasets, for the smaller values of the $\theta$ threshold ($< 0.3$) and the approximation margin $\varepsilon$ equal to $0.2$, the obtained percent is usually below 50\%. Interestingly, not always the smaller value of the $\theta$ threshold resulted in a more significant reduction of the number of discovered patterns. For example, for the Pittsburgh Crime Incident Blotter dataset and parameters $R = 350$ meters, $T = 11520$ minutes as well as approximation margin $\varepsilon$ equal to $0.2$ or $0.1$ the smallest percent was obtained for the $\theta = 0.28$.

\begin{landscape}
\begin{table*}[h!t]
	\scriptsize
	\caption{Computation time (in {sec.}) and the number of discovered patterns for the Pittsburgh Police Incident Blotter Dataset.}
	\begin{tabular}{ccccccccccccccc}
		\toprule
		\multicolumn{15}{c}{\textbf{ R = 350  meters,  T = 11520 (8 days)}}  \\ 
		\midrule
		&\multicolumn{6}{c}{\makecell[c]{\textbf{PI-strong}\\\textbf{ST seq. patterns}}}&\multicolumn{2}{c}{\makecell[c]{\textbf{PI-strong closed}\\\textbf{ST seq. patterns}}}&\multicolumn{6}{c}{\makecell[c]{\textbf{PI-strong CSTS patterns}}}\\
		\cmidrule(lr){2-7}\cmidrule(lr){8-9}\cmidrule(lr){10-15}
    	&\multicolumn{2}{c}{\textit{STS-Miner }}&\multicolumn{2}{c}{\textit{CSTPM }}&\multicolumn{2}{c}{\textit{STBFM }}&\multicolumn{2}{c}{\textit{CST-SPMiner }}&  \multicolumn{2}{c}{\makecell[c]{\textit{CSTS-Miner}\\ $(\varepsilon = 0.025)$}}&\multicolumn{2}{c}{\makecell[c]{\textit{CSTS-Miner}\\ $(\varepsilon = 0.05)$}}  &\multicolumn{2}{c}{\makecell[c]{\textit{CSTS-Miner}\\ $(\varepsilon = 0.075)$}} \\
		\cmidrule(lr){2-3}\cmidrule(lr){4-5}\cmidrule(lr){6-7}\cmidrule(lr){8-9}\cmidrule(lr){10-11}\cmidrule(lr){12-13}\cmidrule(lr){14-15}
		$ \theta $ &  time & \# patterns &  time & \# patterns & time & \# patterns &  time & \# patterns & time & \# patterns  & time & \# patterns & time & \# patterns \\
		\cmidrule{1-15}
0.33&	26& 796& 9& 796&        9& 796&	        9& 657&	        9& 594&	        9& 576&	        9& 556	\\
0.32&	31& 1002&9& 1002&       9& 1002&	    9& 819&	        9& 712&	        9& 681&	        9& 666	\\
0.31&	39& 1320&9& 1320&       9& 1320&	    9& 1065&	    9& 883&	        9& 827&	        9& 802	\\
0.3&	56& 1990&9& 1990&       9& 1990&	    9& 1573&	    9& 1161&	    9& 1050&	    9& 997	\\
0.29&	116& 4371&11& 4371&     10& 4371&	    10& 3323&	    10& 2235&	    10& 2018&	    10& 1932	\\
0.28&	269& 10434&17& 10434&   12& 10434&	    12& 7741&	    12& 4516&	    12& 4206&	    12& 4104	 \\
0.27&	898& 35166&100& 35166&  19& 35166&	    19& 25446&	    19& 14956&	    20& 14182&	    22& 13939	\\
0.26&	-&	-&1658& 143666&     47& 143666&	    47& 100235&	    51& 60977&	    67& 58907&	    99& 58519	\\
0.25&	-&	-&-&	-&          206& 1073917&	206& 683860&	253& 515128&	712& 509232&	1744& 508492 \\
	\bottomrule
\end{tabular}
\label{Table:comptime1}
	\centering
	\scriptsize
	\caption{Computation time (in {sec.}) and the number of discovered patterns for the \textit{complete} Boston Crime Incidents Dataset (- denotes that the computation times were too long to obtain the results).}
	\begin{tabular}{ccccccccccccccc}
		\toprule
		\multicolumn{15}{c}{\textbf{ R = 300  meters,  T = 5760 (4 days)}}  \\ 
		\midrule
		&\multicolumn{6}{c}{\makecell[c]{\textbf{PI-strong}\\\textbf{ST seq. patterns}}}&\multicolumn{2}{c}{\makecell[c]{\textbf{PI-strong closed}\\\textbf{ST seq. patterns}}}&\multicolumn{6}{c}{\makecell[c]{\textbf{PI-strong CSTS patterns}}}\\
		\cmidrule(lr){2-7}\cmidrule(lr){8-9}\cmidrule(lr){10-15}
    	&\multicolumn{2}{c}{\textit{STS-Miner }}&\multicolumn{2}{c}{\textit{CSTPM }}&\multicolumn{2}{c}{\textit{STBFM }}&\multicolumn{2}{c}{\textit{CST-SPMiner }}&  \multicolumn{2}{c}{\makecell[c]{\textit{CSTS-Miner}\\ $(\varepsilon = 0.1)$}}&\multicolumn{2}{c}{\makecell[c]{\textit{CSTS-Miner}\\ $(\varepsilon = 0.05)$}}  &\multicolumn{2}{c}{\makecell[c]{\textit{CSTS-Miner}\\ $(\varepsilon = 0.15)$}} \\
		\cmidrule(lr){2-3}\cmidrule(lr){4-5}\cmidrule(lr){6-7}\cmidrule(lr){8-9}\cmidrule(lr){10-11}\cmidrule(lr){12-13}\cmidrule(lr){14-15}
		$ \theta $ &  time & \# patterns &  time & \# patterns & time & \# patterns &  time & \# patterns & time & \# patterns  & time & \# patterns & time & \# patterns \\
		\cmidrule{1-15}
0.055&	69 &3982&      4 &3982  &          3& 3982&	        3& 3346&	       3& 2472 &	    3& 2232&	        3& 2170	\\
0.05&	92 &5386&      4 &5386&            3& 5386&	        4& 4484&	        3& 3265&	        3& 2977&	        3& 2913	\\
0.045&	129 &7550&     6 &7550&            4& 7550&	        4& 6248&	        4& 4410&	        4& 4060&	        4& 3991	\\
0.04&	199 &12014&    9 &12014&           5& 12014&	    5& 9899  &	            5& 6713&	        5& 6292&	        5& 6211	\\
0.035&	350 &22191&    23 &22191&          6& 22191&	    6& 17732&	            6& 11210&	    6& 10663&	    6& 10576	\\
0.03&	676 &43327&    77 &43327&          8& 43327&	    8& 33954&	            9& 21091&	    8& 20449&	    8& 20328	 \\
0.025&	- &-&           440 &102731&       14& 102731&	    14& 79701&	           14& 48775&	    14& 47964&	    14& 47797	\\
0.02 &	-&	-&          -& -&               33& 367400&	    32& 282156&	            33& 168511&	    33& 167361&	    34& 167139	\\
0.015&	-&	-&          -&	-&              145& 2819490&	142& 2040303&	        160& 1173939&	176& 1172294&	190& 1171955 \\
	\bottomrule
\end{tabular}
\label{Table:comptime2}
\end{table*}
\end{landscape}

\begin{landscape}
\begin{table*}[h!t]
	\centering
	\scriptsize
	\caption{Computation time (in {sec.}) and the number of discovered patterns for the \textit{reduced} Boston Crime Incidents Dataset.}
    \begin{tabular}{ccccccccccccccc}
		\toprule
		\multicolumn{15}{c}{\textbf{ R = 500  meters,  T = 43200 (30 days)}}  \\ 
		\midrule
		&\multicolumn{6}{c}{\makecell[c]{\textbf{PI-strong}\\\textbf{ST seq. patterns}}}&\multicolumn{2}{c}{\makecell[c]{\textbf{PI-strong closed}\\\textbf{ST seq. patterns}}}&\multicolumn{6}{c}{\makecell[c]{\textbf{PI-strong CSTS patterns}}}\\
		\cmidrule(lr){2-7}\cmidrule(lr){8-9}\cmidrule(lr){10-15}
    	&\multicolumn{2}{c}{\textit{STS-Miner }}&\multicolumn{2}{c}{\textit{CSTPM }}&\multicolumn{2}{c}{\textit{STBFM }}&\multicolumn{2}{c}{\textit{CST-SPMiner }}&  \multicolumn{2}{c}{\makecell[c]{\textit{CSTS-Miner}\\ $(\varepsilon = 0.05)$}}&\multicolumn{2}{c}{\makecell[c]{\textit{CSTS-Miner}\\ $(\varepsilon = 0.1)$}}  &\multicolumn{2}{c}{\makecell[c]{\textit{CSTS-Miner}\\ $(\varepsilon = 0.15)$}} \\
		\cmidrule(lr){2-3}\cmidrule(lr){4-5}\cmidrule(lr){6-7}\cmidrule(lr){8-9}\cmidrule(lr){10-11}\cmidrule(lr){12-13}\cmidrule(lr){14-15}
		$ \theta $ &  time & \# patterns &  time & \# patterns & time & \# patterns &  time & \# patterns & time & \# patterns  & time & \# patterns & time & \# patterns \\
		\cmidrule{1-15}
0.01&	6 &34102&      46& 34102 &         0& 34102&	    0& 9987&    9& 8629&	    13& 8580&	    17& 8575	\\
0.0095&	6& 34102&        46& 34102&          0& 34102&	    0& 9987&	9& 8629&	    13& 8580&	    17& 8575	\\
0.009&	6 &34102&       46& 34102&          0& 34102&	    0& 9987&	9& 8629&	    13& 8580&	    17& 8575	\\
0.0085&	6 &34102&       46& 34102&          0& 34102&	    0& 9987&	9& 8629&	    13& 8580&	    17& 8575	\\
0.008&	27 &141860&     1055& 141860&       0& 141860&	    0& 51219&	217& 44228&	    318& 44165&	    393& 44165	\\
0.0075&	27 &141860&     1048& 141860&       0& 141860&	    0& 51219&	215& 44228&	    324& 44165&	    393& 44165	 \\
0.007&	27 &141860&     1050& 141860&       0& 141860&	    0& 51219&	216& 44228&	    318& 44165&	    393& 44165	\\
0.0065 &27 &141860&     1099& 141860&       0& 141860&	    0& 51219&	216& 44228&	    317& 44165&	    393& 44165	\\
0.006&	30 &161238&     1394& 161238&       0& 161238&	    0& 56812&	277& 47407&	    405& 47347&	    487& 47342 \\
0.0055&	30 &161238&     1381& 161238&       0& 161238&	    0& 56812&	278& 47407&	    405& 47347&	    487& 47342 \\
0.005&	43 &228285&     2787& 228285&       1& 228285&	    1& 76894&	369& 65967&	    530& 65903&	    624& 65899 \\
	\bottomrule
\end{tabular}
\label{Table:comptime3}
\end{table*}
\end{landscape}

\begin{figure}[h!t]
	\centering 
	\includegraphics[width=0.6\linewidth]{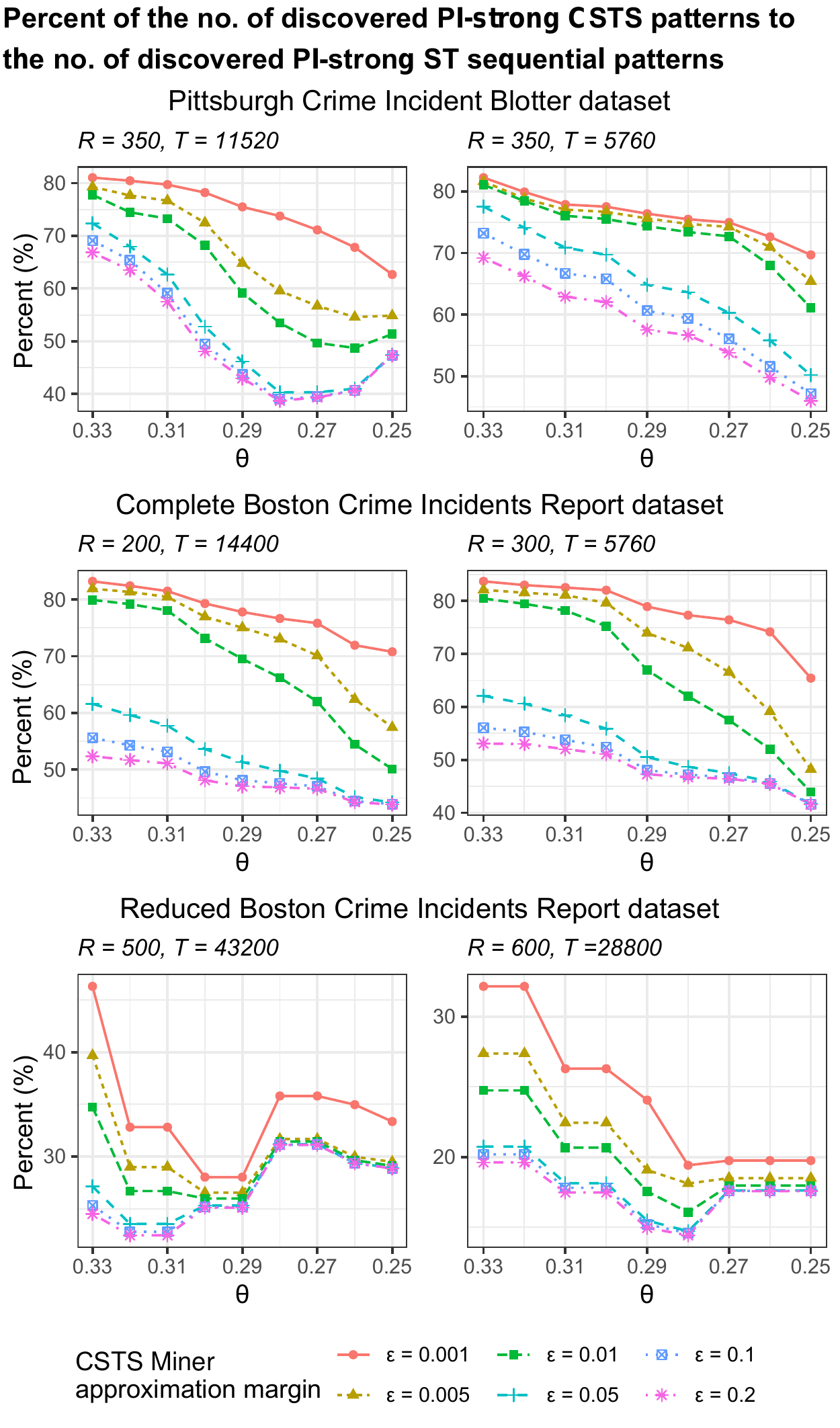}
	\caption{Percent of the number of PI-strong CSTS patterns discovered by the CSTS-Miner algorithm to the number of PI-strong ST sequential patterns discovered by the STS-Miner, STBFM and CSTPM algorithms.}
	\label{Fig:PercentSTBFM}      
\end{figure}

\begin{figure}[h!t]
	\centering 
	\includegraphics[width=0.6\linewidth]{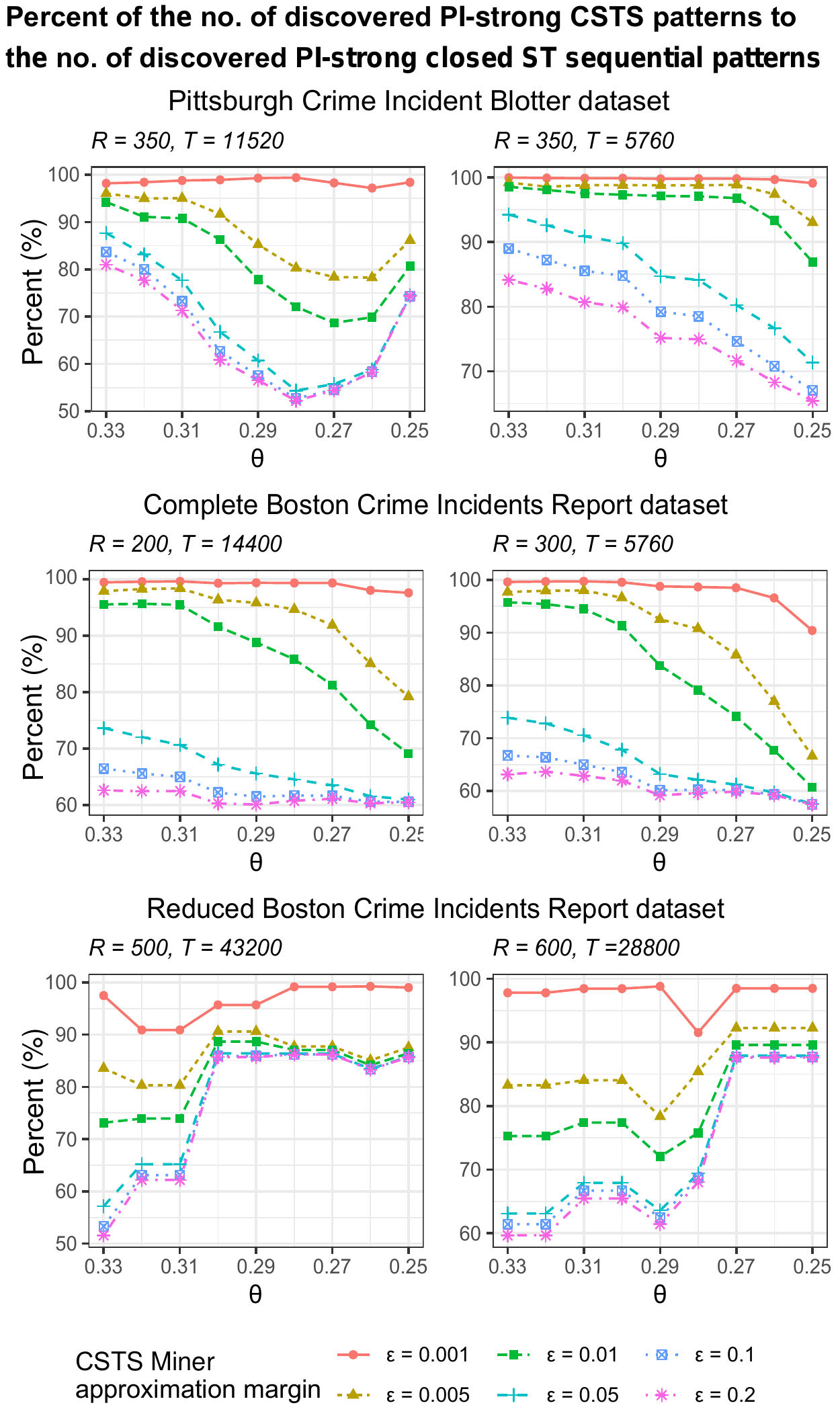}
	\caption{Percent of the number of PI-strong CSTS patterns patterns discovered by the CSTS-Miner algorithm to the number of PI-strong closed significant ST sequential patterns discovered by the CST-SPMiner algorithm.}
	\label{Fig:PercentCST}      
\end{figure}

\begin{figure*}[h!t]
	\centering 
	\includegraphics[width=1.0\linewidth]{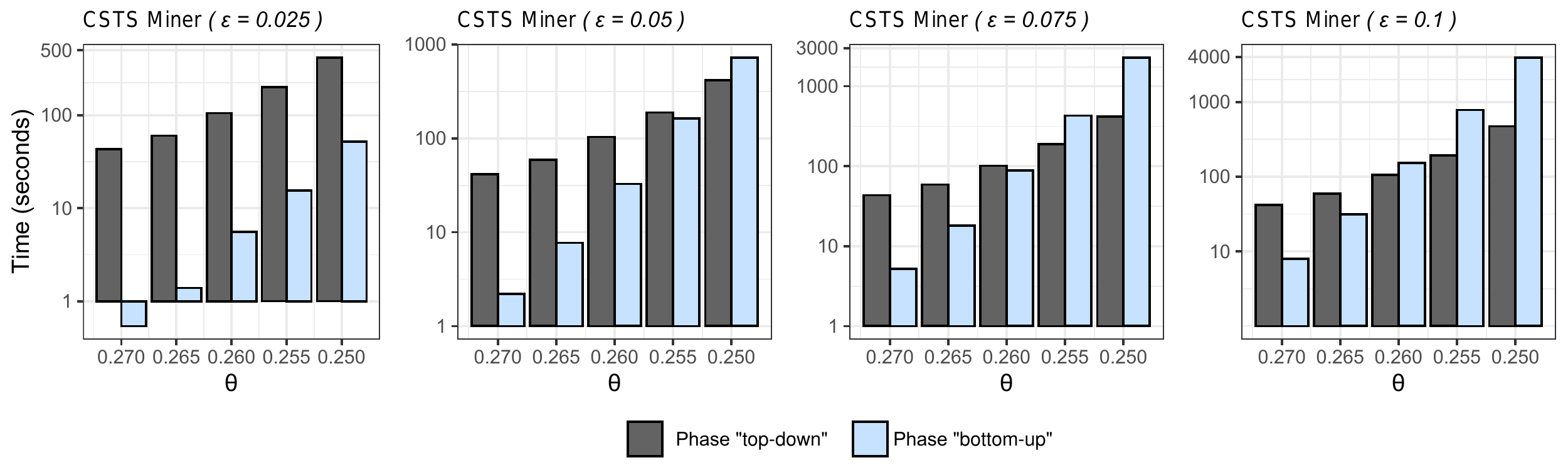}
	\caption{Computation time in seconds of the "top-down" phase (steps 1 - 17) and the "bottom-up" phase (steps 18 - 24) of Algorithm~
	\ref{Alg:CSTS-Miner} for the Pittsburgh Police Incident Blotter Dataset. Please note the logarithmic scale for the computation time.}
	\label{Fig:PhaseTimes}      
\end{figure*}



In Figure~\ref{Fig:PercentCST}, we provide the plots presenting the percent of the number of PI-strong CSTS patterns to the number of PI-strong closed ST sequential patterns. As follows from Lemma~\ref{Lem:CSTSasCLosed}, for $\varepsilon = 0$ the set of PI-strong CSTS patterns is equal to the set of PI-strong closed ST sequential patterns. However, for the greater values of the $\varepsilon$ parameter (such as, for example $0.1$ or $0.2$), CSTS-Miner is able to provide as few as 50\% of the number of PI-strong closed ST sequential patterns discovered by CST-SPMiner. Importantly, even for the smaller values of the $\varepsilon$ parameter (such as, for example, $0.01$), CSTS-Miner is able to provide as few as 70\% of the number of patterns provided by CST-SPMiner (as it is shown, for example, in the plots for the complete Boston Crime Incidents Report dataset presented in Figure~\ref{Fig:PercentCST}).

To summarize, the plots presented in Figures~\ref{Fig:PercentSTBFM} and~\ref{Fig:PercentCST} show that the CSTS-Miner algorithm can provide significantly fewer PI-strong CSTS patterns than PI-strong ST sequential patterns and PI-strong closed ST sequential patterns, even when the small values of the $\varepsilon$ parameter are applied.

In our next experiment, we aimed to compare the computation times of steps 1 - 18 (phase "top-down") and steps 19 - 25 (phase "bottom-up") of Algorithm~\ref{Alg:CSTS-Miner}. As previously noted, the "top-down" phase is responsible for generating all PI-strong ST sequential patterns, while the "bottom-up" phase calculates those of them which are also PI-strong CSTS patterns.

In Figure~\ref{Fig:PhaseTimes}, we present the comparison of computation times (in logarithmic scale) of both phases obtained for the Pittsburgh Police Incident Blotter dataset. Please note that for the smaller values of the approximation margin $\varepsilon$ (such as $\varepsilon = 0.025$), the computation times for the "top-down" phase are more significant than for the "bottom-up" phase. However, with the increasing values of $\varepsilon$ and for smaller values of $\theta$ threshold, the computation time of the "bottom-up" phase can increase significantly. For example, for the parameters $\varepsilon$ equal $0.1$ and $\theta$ equal $0.25$, the computation time of the "bottom-up" phase of Algorithm~\ref{Alg:CSTS-Miner} can be up to four times longer than the computation time of the "top-down" phase for the same values of these parameters.

\subsection{Selection of Representative Patterns}

In this subsection, we provide some interesting examples of discovered sequential patterns of different types of crimes. To do this end, we ran CSTS-Miner using the Pittsburgh Police Incident Blotter Dataset with the following parameters: $R = 300,$ $T = 11520$ minutes (8 days), $\varepsilon = 0.05,$ $\theta = 0.3$.

The examples of interesting resultant patterns include:
\begin{enumerate}[leftmargin=*]
    \item \small{\textit{public\_drunkenness} $\rightarrow$ \textit{robbery/ bank/ knife} (PI = 0.5).}
    \item \small{\textit{public\_drunkenness} $\rightarrow$ \textit{robbery/ bank/ strongarm} (PI = 0.0.44).}
    \item \small{\textit{simple\_assault/ injury $\rightarrow$ public\_drunkenness $\rightarrow$ public\_drunkenness $\rightarrow$ all\_other\_offenses (expt\_traff) $\rightarrow$ fail\_ disord\_per\_to\_disperse} (PI = 0.30).}
    \item \small{\textit{simple\_assault/ injury $\rightarrow$ public\_drunkenness $\rightarrow$ robbery/ bank/ \_strongarm} (PI = 0.33).}
    \item \small{\textit{robbery/ highway/ gun $\rightarrow$ sale/ use\_of\_air\_rifles} (PI = 0.5).}
\end{enumerate}

Patterns 1 and 2 could provide essential information about types of banks robberies. Pattern 1 states that half of the bank robberies using a knife were conducted within 300 meters and 8 days from the reported public drunkenness incidents. On the other hand, pattern 2 communicates that 44\% of all bank robberies using weapon (strongarm) occurred within 300 meters and 8 days from the reported public drunkenness incidents. The other interesting example is pattern 4, which states that half of the usage of air rifles occurred within 300 meters and 8 days from the highway robbery.


\section{Conclusion}
\label{sec:Conclusions}

In this article, we offered a new type of ST sequential patterns called $\varepsilon$-constricted ST sequential patterns (CSTS patterns) and we thoroughly analyzed their theoretical properties. Specifically, we showed that the set of CSTS patterns is a subset of the set of closed ST sequential patterns and that each CSTS pattern is also a closed ST sequential pattern. Moreover, we showed that given the set of PI-strong CSTS patterns one can obtain the set of all PI-strong ST sequential patterns and approximate participation index of each of them with the approximation margin $\pm~\varepsilon$. We also offered a new algorithm called CSTS-Miner that discovers all PI-strong CSTS patterns. To this end, CSTS-Miner adapts the MAX-Tree structure for the more efficient candidate patterns generation. The proposed MAX-Tree is generated in the two main phases of CSTS-Miner: the first one called "top-down" in which all PI-strong ST sequential patterns are generated using the breadth-first strategy, and the second one called "bottom-up" which calculates PI-strong ST sequential patterns being CSTS patterns. We analyzed properties and computation times of CSTS-Miner.

The experiments with the CSTS-Miner algorithm were conducted using two crime-related datasets for the cities of Boston and Pittsburgh: the Pittsburgh Police Incident Blotter Dataset and the Boston Crime Incident Reports Dataset. Each of the selected datasets consists of various types of crime and numerous event instances. 
Moreover, to better verify the capabilities of the proposed algorithm, we extracted a reduced dataset from the complete Boston Crime Indecent Reports dataset. The resultant reduced dataset contains 10 least frequent crime event types and 896 event instances. 

In the experimental evaluation, we compared the results obtained with the proposed CSTS-Miner algorithm with four other state-of-the-art algorithms: STS-Miner \citep{ref1284:Huang2008}, CSTPM \citep{ref1284:Mohan2012}, STBFM \citep{ref1284:Maciag2019}  and CST-SPMiner \citep{ref1284:Maciag2019-Kes}. The STS-Miner, CSTPM and STBFM algorithms discover PI-strong ST sequential patterns, whereas CST-SPMiner discovers PI-strong closed ST sequential patterns. Each of the selected algorithms, as well as the proposed algorithm, use the participation index to measure the significance of the discovered patterns. As we presented in the experiments, in the case of the Pittsburgh Police Incident Blotter Dataset and in the cases of the complete and reduced Boston Crime Incident Reports Dataset, CSTS-Miner can return much fewer patterns than the other selected algorithms. In particular, in the case of the Pittsburgh Police Incident Blotter Dataset, CSTS-Miner provides up to 60\% fewer patterns when compared to  STBFM and up to 50\% fewer patterns when compared to CST-SPMiner. Similarly, for the complete Boston Crime Incident Reports Dataset, CSTS-Miner provides up to 60\% fewer patterns when compared to STBFM and up to 40\% fewer patterns when compared to CST-SPMiner. For the reduced Boston Crime Incident Reports Dataset, CSTS-Miner provides up to 85\% fewer patterns than STBFM and up to 50\% fewer patterns than CST-SPMiner.


\section*{Acknowledgements}

This work was supported by the RENOIR (Reverse Engineering of Social Information Processing) program and by the Institute of Computer Science of Warsaw University of Technology. 


\bibliographystyle{model5-names}
\biboptions{authoryear}
\bibliography{main}
\vfill

\end{document}